\newcommand{\commentout}[1]{}
\newcommand{\junk}[1]{}
\Crefname{corollary}{Corollary}{Corollaries}
\Crefname{proposition}{Proposition}{Propositions}
\Crefname{theorem}{Theorem}{Theorems}
\Crefname{definition}{Definition}{Definitions}
\Crefname{assumption}{Assumption}{Assumptions}
\Crefname{example}{Example}{Examples}
\Crefname{remark}{Remark}{Remarks}
\Crefname{setting}{Setting}{Settings}
\Crefname{lemma}{Lemma}{Lemmas}
\declaretheorem[name=Theorem,refname={Theorem,Theorems},Refname={Theorem,Theorems}]{theorem}
\declaretheorem[name=Lemma,refname={Lemma,Lemmas},Refname={Lemma,Lemmas},sibling=theorem]{lemma}
\declaretheorem[name=Assumption,refname={Assumption,Assumptions},Refname={Assumption,Assumptions}]{assumption}
\newcommand{\cA}{\mathcal{A}}
\newcommand{\cC}{\mathcal{C}}
\newcommand{\cL}{\mathcal{L}}
\newcommand{\cN}{\mathcal{N}}
\newcommand{\cS}{\mathcal{S}}
\newcommand{\cV}{\mathcal{V}}
\newcommand{\realset}{\mathbb{R}}
\newcommand{\diag}[1]{\mathrm{diag}\left(#1\right)}
\newcommand{\E}[2]{\mathbb{E}_{#1} \left[#2\right]}
\newcommand{\condE}[2]{\mathbb{E} \left[#1 \,\middle|\, #2\right]}
\newcommand{\prob}[1]{\mathbb{P} \left(#1\right)}
\newcommand{\condprob}[2]{\mathbb{P} \left(#1 \,\middle|\, #2\right)}
\newcommand{\var}[1]{\mathrm{var} \left[#1\right]}
\newcommand{\condvar}[2]{\mathrm{var} \left[#1 \,\middle|\, #2\right]}
\newcommand{\condcov}[2]{\mathrm{cov} \left[#1 \,\middle|\, #2\right]}
\newcommand{\abs}[1]{\left|#1\right|}
\newcommand*\dif{\mathop{}\!\mathrm{d}}
\newcommand{\I}[1]{\mathds{1} \! \left\{#1\right\}}
\newcommand{\maxnorm}[1]{\|#1\|_\infty}
\newcommand{\normw}[2]{\|#1\|_{#2}}
\newcommand{\set}[1]{\left\{#1\right\}}
\newcommand{\T}{^\top}
\DeclareMathOperator*{\argmax}{arg\,max\,}
\let\det\relax
\DeclareMathOperator{\det}{det}
\let\trace\relax
\DeclareMathOperator{\trace}{tr}
\mathchardef\mhyphen="2D
\newcommand{\hierts}{\ensuremath{\tt HierTS}\xspace}
\newcommand{\lints}{\ensuremath{\tt LinTS}\xspace}
\newcommand{\linucb}{\ensuremath{\tt LinUCB}\xspace}
\newcommand{\oraclets}{\ensuremath{\tt OracleTS}\xspace}
\newcommand{\ts}{\ensuremath{\tt TS}\xspace}
\def\Bregret{\mathcal{BR}}
\begin{document}

\twocolumn[

\aistatstitle{Hierarchical Bayesian Bandits}

\aistatsauthor{Joey Hong \And Branislav Kveton \And Manzil Zaheer \And Mohammad Ghavamzadeh}

\aistatsaddress{UC Berkeley$^*$ \And Amazon$^*$ \And Google DeepMind \And Google Research}]

\begin{abstract}
Meta-, multi-task, and federated learning can be all viewed as solving similar tasks, drawn from a distribution that reflects task similarities. We provide a unified view of all these problems, as learning to act in a \emph{hierarchical Bayesian bandit}. We propose and analyze a natural hierarchical Thompson sampling algorithm (\hierts) for this class of problems. Our regret bounds hold for many variants of the problems, including when the tasks are solved sequentially or in parallel; and show that the regret decreases with a more informative prior. Our proofs rely on a novel total variance decomposition that can be applied beyond our models. Our theory is complemented by experiments, which show that the hierarchy helps with knowledge sharing among the tasks. This confirms that hierarchical Bayesian bandits are a universal and statistically-efficient tool for learning to act with similar bandit tasks.
\end{abstract}

\section{INTRODUCTION}
\label{sec:introduction}

A \emph{stochastic bandit} \citep{lai85asymptotically,auer02finitetime,lattimore19bandit} is an online learning problem where a \emph{learning agent} sequentially interacts with an environment over $n$ rounds. In each round, the agent takes an \emph{action} and receives a \emph{stochastic reward}. The agent aims to maximize its expected cumulative reward over $n$ rounds. It does not know the mean rewards of the actions \emph{a priori}, and must learn them by taking the actions. This induces the \emph{exploration-exploitation dilema}: \emph{explore}, and learn more about an action; or \emph{exploit}, and take the action with the highest estimated reward. In online advertising, an action could be showing an advertisement and its reward could be an indicator of a click.

\renewcommand{\thefootnote}{\fnsymbol{footnote}}
\footnotetext[1]{The work started while being at Google Research.}
\renewcommand{\thefootnote}{\arabic{footnote}}

More statistically-efficient exploration is the primary topic of bandit papers. This is attained by leveraging the structure of the problem, such as the form of the reward distribution \citep{garivier11klucb}, prior distribution over model parameters \citep{thompson33likelihood,agrawal12analysis,chapelle11empirical,russo18tutorial}, conditioning on known feature vectors \citep{dani08stochastic,abbasi-yadkori11improved,agrawal13thompson}, or modeling the process by which the total reward arises \citep{radlinski08learning,kveton15cascading,gai12combinatorial,chen16combinatorial,kveton15tight}. In this work, we solve multiple similar bandit tasks, and each task teaches the agent how to solve other tasks more efficiently. 

We formulate the problem of learning to solve similar bandit tasks as regret minimization in a \emph{hierarchical Bayesian model} \citep{gelman13bayesian}. Each task is parameterized by a \emph{task parameter}, which is sampled i.i.d.\ from a distribution parameterized by a \emph{hyper-parameter}. The parameters are unknown and this relates all tasks, in the sense that each task teaches the agent about any other task. We derive Bayes regret bounds that reflect the structure of the problem and show that the price for learning the hyper-parameter is low. Our derivations use a novel \emph{total variance decomposition}, which decomposes the parameter uncertainty into per-task uncertainty conditioned on knowing the hyper-parameter and hyper-parameter uncertainty. After that, we individually bound each uncertainty source by elliptical lemmas \citep{dani08stochastic,abbasi-yadkori11improved}. Our approach can be exactly implemented and analyzed in hierarchical multi-armed and linear bandits with Gaussian rewards, but can be extended to other graphical model structures.

We build on numerous prior works that study a similar structure, under the names of collaborating filtering bandits \citep{gentile14online,kawale15efficient,li16collaborative}, bandit meta-learning and multi-task learning \citep{azar13sequential,deshmukh17multitask,bastani19meta,cella20metalearning,kveton21metathompson,moradipari21parameter}, and representation learning \citep{yang21impact}. Despite it, we make major novel contributions, both in terms of a more general setting and analysis techniques. Our setting relaxes the assumptions that the tasks are solved in a sequence and that exactly one task is solved per round. Moreover, while the design of our posterior sampling algorithm is standard, we make novel contributions in its analysis. In the sequential setting (\cref{sec:sequential regret}), we derive a Bayes regret bound by decomposing the posterior covariance, which is an alternative to prior derivations based on filtered mutual information \citep{russo16information,lu19informationtheoretic}. This technique is general, simple, and yields tighter regret bounds because it avoids marginal task parameter covariance; and so is of a broad interest. In the concurrent setting (\cref{sec:concurrent regret}), we bound the additional regret due to not updating the posterior after each interaction. This is non-trivial and a major departure from other bandit analyses. Our Bayes regret bound for this setting is the first of its kind.

The paper is organized as follows. In \cref{sec:setting}, we formalize our setting of \emph{hierarchical Bayesian bandits}. In \cref{sec:algorithm}, we introduce a natural Thompson sampling algorithm (\hierts) for solving it. In \cref{sec:models}, we instantiate it in hierarchical Gaussian models. In \cref{sec:key ideas}, we review key ideas in our regret analyses, including a novel total covariance decomposition that allows us to analyze posteriors in hierarchical models. In \cref{sec:regret bounds}, we prove Bayes regret bounds for \hierts in sequential and concurrent settings. Finally, in \cref{sec:experiments}, we evaluate \hierts empirically to confirm our theoretical results.

\section{SETTING}
\label{sec:setting}

We use the following notation. Random variables are capitalized, except for Greek letters like $\theta$ and $\mu$. For any positive integer $n$, we define $[n] = \set{1, \dots, n}$. The indicator function is denoted by $\I{\cdot}$. The $i$-th entry of vector $v$ is $v_i$. If the vector is already indexed, such as $v_j$, we write $v_{j, i}$. A matrix with diagonal entries $v$ is $\diag{v}$. For any matrix $M \in \realset^{d \times d}$, the maximum eigenvalue is $\lambda_1(M)$ and the minimum is $\lambda_d(M)$. The big O notation up to logarithmic factors is $\tilde{O}$.

Now we present our setting for solving similar bandit tasks. Each task is a \emph{bandit instance} with actions $a \in \cA$, where $\cA$ denotes an \emph{action set}. Rewards of actions are generated by \emph{reward distribution} $P(\cdot \mid a; \theta)$, where $\theta \in \Theta$ is an unknown parameter shared by all actions. We assume that the rewards are $\sigma^2$-sub-Gaussian and denote by $r(a; \theta) = \E{Y \sim P(\cdot \mid a; \theta)}{Y}$ the mean reward of action $a$ under $\theta$. The learning agent interacts with $m$ tasks. In a recommender system, each task could be an individual user. The task $s \in [m]$ is parameterized by a \emph{task parameter} $\theta_{s, *} \in \Theta$, which is sampled i.i.d.\ from a \emph{task prior distribution} $\theta_{s, *} \sim P(\cdot \mid \mu_*)$; which is parameterized by an unknown \emph{hyper-parameter} $\mu_*$.

The agent acts at discrete decision points, which are integers and we call them \emph{rounds}. At round $t \geq 1$, the agent is asked to act in a set of tasks $\cS_t \subseteq [m]$. It takes actions $A_t = (A_{s, t})_{s \in \cS_t}$, where $A_{s, t} \in \cA$ is the action in task $s$; and receives rewards $Y_t = (Y_{s, t})_{s \in \cS_t} \in \realset^{|\cS_t|}$, where $Y_{s, t} \sim P(\cdot \mid A_{s, t}; \theta_{s, *})$ is a stochastic reward for taking action $A_{s, t}$ in task $s$. The rewards are drawn i.i.d.\ from their respective distributions. The set $\cS_t$ can depend arbitrarily on the history. The assumption that the action set $\cA$ is the same across all tasks and rounds is only to simplify exposition.

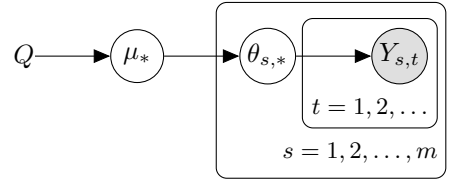
\begin{figure}[t]
  \centering
  \begin{tikzpicture}

  \node[obs] (y) {$Y_{s, t}$};
  \node[latent, left=of y] (theta) {$\theta_{s, *}$};
  \node[latent, left=of theta]  (mu) {$\mu_*$};
  \node[const, left=of mu] (q) {$Q$};
  \edge {q} {mu} ;
  \edge {mu} {theta} ; %
  \edge {theta} {y} ; %

  \plate {y} {(y)} {$t = 1, 2, \hdots$} ;
  \plate[inner sep=.2cm, xshift=-.1cm] {theta} {(theta)(y)} {$s = 1, 2, \hdots, m$} ;

\end{tikzpicture}
  \caption{Graphical model of our hierarchical Bayesian bandit.}
  \label{fig:setting}
\end{figure}

In \emph{hierarchical Bayesian bandits}, the hyper-parameter $\mu_*$ is initially sampled from a \emph{hyper-prior} $Q$ known by the learning agent. Our full model is given by
\begin{align*}
  \mu_* & \sim Q\,, \\
  \theta_{s, *} \mid \mu_*
  & \sim P(\cdot \mid \mu_*)\,,
  & \forall s \in [m]\,, \\
  Y_{s, t} \mid A_{s, t}, \theta_{s, *}
  & \sim P(\cdot \mid A_{s, t}; \theta_{s, *})\,,
  & \forall t \geq 1, \, s \in \cS_t\,,
\end{align*}
and also visualized in \cref{fig:setting}. Note that $P$ denotes both the task prior and reward distribution; but they can be distinguished based on their parameters. Our setting is an instance of a hierarchical Bayesian model commonly used in supervised learning \citep{lindley72bayes,zhang17survey}, and has been studied in bandits in special cases where tasks appear sequentially \citep{kveton21metathompson,basu21noregrets}. 

Our learning agent interacts with each of $m$ tasks for at most $n$ times. So the total number of rounds varies, as it depends on the number of tasks that the agent interacts with simultaneously in each round. However, the maximum number of interactions is $m n$. The goal is to minimize the \emph{Bayes regret} \citep{russo14learning} defined as
\begin{align*}
  \Bregret(m, n)
  = \E{}{\sum_{t \geq 1} \sum_{s \in \cS_t}
  r(A_{s, *}; \theta_{s, *}) - r(A_{s, t}; \theta_{s, *})}\,,
\end{align*}
where $A_{s, *} = \argmax_{a \in \cA} r(a; \theta_{s, *})$ is the optimal action in task $s$. The expectation in $\Bregret(m, n)$ is over $\mu_*$, $\theta_{s, *}$ for $s \in [m]$, and actions $A_{s,t}$ of the agent. While weaker than a traditional frequentist regret, the Bayes regret is a practical performance metric, as we are often interested in an average performance \citep{hong20latent,kveton21metathompson}. For example, when recommending to a group of users, it is natural to optimize over the whole population rather than an individual. Our definition of $\Bregret(m, n)$ is also dictated by the fact that $m$ and $n$ are the primary quantities of interest in our regret analyses. Our goal is to minimize $\Bregret(m, n)$ without knowing $\mu_*$ and $\theta_{s, *}$ a priori.

Since the set of tasks $\cS_t$ can be chosen arbitrarily, our setting is general and subsumes many prior settings. For instance, when the agent interacts with the same task for $n$ rounds before shifting to the next one, and $\cS_t = \{\lceil t / n \rceil\}$, we get a \emph{meta-learning bandit} \citep{kveton21metathompson}. More generally, when the agent interacts with the tasks sequentially, $|\cS_t|$ = 1, our setting can be viewed as multi-task learning where any task helps the agent to solve other tasks. Therefore, we recover a \emph{multi-task bandit} \citep{wan21metadatabased}. Finally, when the agent acts in multiple tasks concurrently, $|\cS_t| > 1$, we recover the setting of \emph{collaborative filtering bandits} \citep{gentile14online,li16collaborative} or more recently \emph{federated bandits} \citep{shi21federated}. Our algorithm and its analysis apply to all of these settings.

\section{ALGORITHM}
\label{sec:algorithm}

\begin{algorithm}[t]
  \caption{Hierarchical Thompson sampling.}
  \label{alg:ts}
  \begin{algorithmic}[1]
    \State \textbf{Input:} Hyper-prior $Q$
    \State Initialize $Q_1 \gets Q$
    \For{$t = 1, 2, \dots$}
      \State Observe tasks $\cS_t \subseteq [m]$
      \State Sample $\mu_t \sim Q_t$
      \For{$s \in \cS_t$}
        \State Compute $P_{s, t}(\theta \mid \mu_t) \propto \cL_{s, t}(\theta) P(\theta \mid \mu_t)$
        \State Sample $\theta_{s, t} \sim P_{s, t}(\cdot \mid \mu_t)$
        \State Take action $A_{s, t} \gets \argmax_{a \in \cA} r(a; \theta_{s, t})$
        \State Observe reward $Y_{s, t}$
      \EndFor
      \State Update $Q_{t + 1}$
    \EndFor
  \end{algorithmic}
\end{algorithm}

We take a Bayesian view and use hierarchical \emph{Thompson sampling (TS)}, which we call \hierts, to solve our problem class. \hierts samples task parameters from their posterior conditioned on history. Specifically, let $H_{s, t} = ((A_{s, \ell}, Y_{s, \ell}))_{\ell < t, \, s \in \cS_\ell}$ denote the history of all interactions of \hierts with task $s$ until round $t$, and $H_t = (H_{s, t})_{s \in [m]}$ be the concatenation of all histories up to round $t$. For each task $s \in \cS_t$ in round $t$, \hierts samples $\theta_{s, t} \sim \prob{\theta_{s, *} = \cdot \mid H_t}$ and then takes action $A_{s, t} = \argmax_{a \in \cA} r(a; \theta_{s, t})$. The key difference from classical Thompson sampling is that the history $H_t$ includes observations of multiple tasks.

To sample $\theta_{s, t}$, we must address how the uncertainty over the unknown hyper-parameter $\mu_*$ and task parameters $\theta_{s, *}$ is modeled. The key idea is to maintain a \emph{hyper-posterior} $Q_t$ over $\mu_*$, given by
\begin{align*}
  Q_t(\mu)
  = \prob{\mu_* = \mu \mid H_t}\,,
\end{align*}
and then perform two-stage sampling. In particular, in round $t$, we first sample hyper-parameter $\mu_t \sim Q_t$. Next, for any task $s \in \cS_t$, we sample the task parameter $\theta_{s, t} \sim P_{s, t}(\cdot \mid \mu_t)$, where
\begin{align*}
  P_{s, t}(\theta \mid \mu)
  = \prob{\theta_{s, *} = \theta \mid \mu_* = \mu, H_{s, t}}\,.
\end{align*}
In $P_{s, t}(\cdot \mid \mu)$, we only condition on the history of task $s$, since $\theta_{s, *}$ is independent of the other task histories given $\mu_* = \mu$ (\cref{fig:setting}). This process clearly samples from the true posterior, which is given by
\begin{align}
  \condprob{\theta_{s, *} = \theta}{H_t}
  & = \int_\mu \condprob{\theta_{s, *} = \theta, \mu_* = \mu}{H_t} \dif \mu
  \label{eq:task posterior} \\
  & = \int_\mu P_{s, t}(\theta \mid \mu)
  Q_t(\mu) \dif \mu\,,
  \nonumber
\end{align}
where $P_{s, t}(\theta \mid \mu) \propto \cL_{s, t}(\theta) P(\theta \mid \mu)$ and
\begin{align*}
  \textstyle
  \cL_{s, t}(\theta)
  = \prod_{(a, y) \in H_{s, t}} P(y \mid a; \theta)
\end{align*}
denotes the likelihood of rewards in task $s$ given task parameter $\theta$.

The pseudo-code of \hierts is shown in \cref{alg:ts}. Sampling in \eqref{eq:task posterior} can be implemented exactly in Gaussian graphical models (\cref{sec:models}). These models have interpretable closed-form posteriors, which permit the regret analysis of \hierts. In practice, \hierts can be implemented for any posterior distributions, but may require approximate inference \citep{doucet01sequential} to tractably sample from the posterior.

\section{HIERARCHICAL GAUSSIAN BANDITS}
\label{sec:models}

Now we instantiate \hierts in hierarchical Gaussian models. This yields closed-form posteriors, which permit regret analysis (\cref{sec:key ideas}). We discuss generalization to other distributions in \cref{sec:extensions}.

We assume that the environment is generated as
\begin{align}
  \mu_*
  & \sim \cN(\mu_q, \Sigma_q)\,,
  \label{eq:gaussian hierarchical} \\
  \theta_{s, *} \mid \mu_*
  & \sim \cN(\mu_*, \Sigma_0)\,,
  & \forall s \in [m] \,,
  \nonumber \\
  Y_{s, t} \mid A_{s, t}, \theta_{s, *}
  & \sim \cN(A_{s, t}\T \theta_{s, *}, \sigma^2)\,,
  & \forall t \geq 1, \, s \in \cS_t\,,
  \nonumber
\end{align}
where $\Sigma_q \in \realset^{d \times d}$ and $\Sigma_0 \in \realset^{d \times d}$ are covariance matrices; $\mu_q$, $\mu_*$, $\theta_{s, *}$ are $d$-dimensional vectors; the set of actions is $\cA \subseteq \realset^d$; and the mean reward of action $a \in \cA$ is $r(a; \theta) = a\T \theta$. The reward noise is $\cN(0, \sigma^2)$. This formulation captures both the multi-armed and linear bandits, since the actions in the former can be viewed as vectors in a standard Euclidean basis. We assume that all of $\mu_q$, $\Sigma_q$, $\Sigma_0$, and $\sigma$ are known by the agent. This assumption is only needed in the analysis of \hierts, where we require an analytically tractable posterior. We relax it in our experiments (\cref{sec:experiments}), where we learn these quantities from past data.

\subsection{Gaussian Bandit}
\label{sec:gaussian bandit}

We start with a $K$-armed Gaussian bandit, which we instantiate as \eqref{eq:gaussian hierarchical} as follows. The task parameter $\theta_{s, *}$ is a vector of mean rewards in task $s$, where $\theta_{s, *, i}$ is the mean reward of action $i$. The covariance matrices are diagonal, $\Sigma_q = \sigma_q^2 I_K$ and $\Sigma_0 = \sigma_0^2 I_K$. We assume that both $\sigma_q > 0$ and $\sigma_0 > 0$ are known. The reward distribution of action $i$ is $\cN(\theta_{s, *, i}, \sigma^2)$, where $\sigma > 0$ is a known reward noise.

Because $\Sigma_q$ and $\Sigma_0$ are diagonal, the hyper-posterior in round $t$ factors across the actions. Specifically, it is $Q_t = \cN(\bar{\mu}_t, \bar{\Sigma}_t)$, where $\bar{\Sigma}_t = \diag{(\bar{\sigma}_{t, i}^2)_{i \in [K]}}$ and
\begin{align}
  \bar{\mu}_{t, i}
  & = \bar{\sigma}_{t, i}^2 \left(\frac{\mu_{q, i}}{\sigma_q^2} +
  \sum_{s \in [m]} \frac{N_{s, t, i}}{N_{s, t, i} \sigma_0^2 + \sigma^2}
  \frac{B_{s, t, i}}{N_{s, t, i}}\right)\,,
  \label{eq:mab hyperposterior} \\
  \bar{\sigma}_{t, i}^{-2}
  & = \sigma_q^{-2} + \sum_{s \in [m]} \frac{N_{s, t, i}}{N_{s, t, i} \sigma_0^2 + \sigma^2}\,.
  \nonumber
\end{align} 
Here $N_{s, t, i} = \sum_{\ell < t} \I{s \in \cS_\ell, A_{s, \ell} = i}$ is the number of times that action $i$ is taken in task $s$ up to round $t$ and $B_{s, t, i} = \sum_{\ell < t} \I{s \in \cS_\ell,  A_{s, \ell} = i} Y_{s, \ell}$ is its total reward. The hyper-posterior is derived in Appendix D of \citet{kveton21metathompson}. To understand it, it is helpful to view it as a Gaussian posterior where each task is a single observation. The observation of task $s$ is the empirical mean reward estimate of action $i$ in task $s$, $B_{s, t, i} / N_{s, t, i}$, and its variance is $(N_{s, t, i} \sigma_0^2 + \sigma^2) / N_{s, t, i}$. The tasks with more observations affect the value of $\bar{\mu}_{t, i}$ more, because their mean reward estimates have lower variances. The variance never decreases below $\sigma_0^2$, because even the actual mean reward $\theta_{s, *, i}$ would be a noisy observation of $\mu_{*, i}$ with variance $\sigma_0^2$.

After the hyper-parameter is sampled, $\mu_t \sim Q_t$, the task parameter is sampled, $\theta_{s, t} \sim \cN(\tilde{\mu}_{s, t}, \tilde{\Sigma}_{s, t})$, where $\tilde{\Sigma}_{s, t} = \diag{(\tilde{\sigma}_{s, t, i}^2)_{i \in [K]}}$ and
\begin{align}
  \tilde{\mu}_{s, t, i} 
  & = \tilde{\sigma}_{s, t, i}^2 \left(\frac{\mu_t}{\sigma_0^2} +
  \frac{B_{s, t, i}}{\sigma^2}\right)\,,
  \label{eq:mab conditional} \\
  \tilde{\sigma}_{s, t, i}^{-2}
  & = \frac{1}{\sigma_0^2} + \frac{N_{s, t, i}}{\sigma^2}\,.
  \nonumber
\end{align}
Note that the above is a Gaussian posterior with prior $\cN(\mu_t, \sigma_0^2 I_K)$ and $N_{s, t, i}$ observations.

\subsection{Linear Bandit with Gaussian Rewards}
\label{sec:linear bandit}

Now we study a $d$-dimensional linear bandit, which is instantiated as \eqref{eq:gaussian hierarchical} as follows. The task parameter $\theta_{s, *}$ are coefficients in a linear model. The covariance matrices $\Sigma_q$ are $\Sigma_0$ are positive semi-definite and known. The reward distribution of action $a$ is $\cN(a\T \theta_{s, *}, \sigma^2)$, where $\sigma > 0$ is a known reward noise.

Similarly to \cref{sec:gaussian bandit}, we obtain closed-form posteriors using \citet{kveton21metathompson}. The hyper-posterior in round $t$ is $Q_t = \cN(\bar{\mu}_t, \bar{\Sigma}_t)$, where
\begin{align}
  \bar{\mu}_t
  & = \bar{\Sigma}_t \Big(\Sigma_q^{-1} \mu_q +
  \smashoperator{\sum_{s \in [m]}}
  B_{s, t} - G_{s, t}(\Sigma_0^{-1} + G_{s, t})^{-1} B_{s, t}\Big)
  \nonumber \\
  & = \bar{\Sigma}_t \Big(\Sigma_q^{-1} \mu_q +
  \sum_{s \in [m]} (\Sigma_0 + G_{s, t}^{-1})^{-1} G_{s, t}^{-1} B_{s, t}\Big)\,,
  \nonumber \\
  \bar{\Sigma}_t^{-1}
  & = \Sigma_q^{-1} +
  \sum_{s \in [m]} G_{s, t} - G_{s, t}(\Sigma_0^{-1} + G_{s, t})^{-1} G_{s, t}
  \nonumber \\
  & = \Sigma_q^{-1} +
  \sum_{s \in [m]} (\Sigma_0 + G_{s, t}^{-1})^{-1}\,.
  \label{eq:linear hyperposterior}
\end{align}
Here
\begin{align*}
  G_{s, t}
  = \sigma^{-2} \sum_{\ell < t} \I{s \in \cS_\ell} A_{s, \ell} A_{s, \ell}\T
\end{align*}
is the outer product of the features of taken actions in task $s$ up to round $t$ and
\begin{align*}
  B_{s, t}
  = \sigma^{-2} \sum_{\ell < t} \I{s \in \cS_\ell} A_{s, \ell} Y_{s, \ell}
\end{align*}
is their sum weighted by the observed rewards. Similarly to \eqref{eq:mab hyperposterior}, it is helpful to view \eqref{eq:linear hyperposterior} as a multivariate Gaussian posterior where each task is a single observation. The observation of task $s$ is the least squares estimate of $\theta_{s, *}$ from task $s$, $G_{s, t}^{-1} B_{s, t}$, and its covariance is $\Sigma_0 + G_{s, t}^{-1}$. Again, the tasks with many observations affect the value of $\bar{\mu}_t$ more, because $G_{s, t}^{-1}$ approaches a zero matrix in these tasks. In this setting, the covariance approaches $\Sigma_0$, because even the unknown task parameter $\theta_{s, *}$ would be a noisy observation of $\mu_*$ with covariance $\Sigma_0$.

After the hyper-parameter is sampled, $\mu_t \sim Q_t$, the task parameter is sampled, $\theta_{s, t} \sim \cN(\tilde{\mu}_{s, t}, \tilde{\Sigma}_{s, t})$, where
\begin{align}
  \tilde{\mu}_{s, t} 
  & = \tilde{\Sigma}_{s, t} \left(\Sigma_0^{-1} \mu_t + B_{s, t}\right)\,,
  \label{eq:linear conditional} \\
  \tilde{\Sigma}_{s, t}^{-1}
  & = \Sigma_0^{-1} + G_{s, t}\,.
  \nonumber
\end{align}
The above is the posterior of a linear model with a Gaussian prior $\cN(\mu_t, \Sigma_0)$ and Gaussian observations.

\section{KEY IDEAS IN OUR ANALYSES}
\label{sec:key ideas}

This section reviews key ideas in our regret analyses, including a novel variance decomposition for the posterior of a hierarchical Gaussian model. Due to space constraints, we only discuss the linear bandit in \cref{sec:linear bandit}.

\subsection{Bayes Regret Bound}

Fix round $t$ and task $s \in \cS_t$. Since \hierts is a posterior sampling algorithm, both the posterior sample $\theta_{s, t}$ and the unknown task parameter $\theta_{s, *}$ are i.i.d.\ conditioned on $H_t$. Moreover, \eqref{eq:task posterior} is a marginalization and conditioning in a hierarchical Gaussian model given in \cref{fig:setting}. Therefore, although we never explicitly derive $\theta_{s, *} \mid H_t$, we know that it is a multivariate Gaussian distribution \citep{koller09probabilistic}; and we denote it by $\condprob{\theta_{s, *} = \theta}{H_t} = \cN(\theta; \hat{\mu}_{s, t}, \hat{\Sigma}_{s, t})$.

Following existing Bayes regret analyses \citep{russo14learning}, we have that
\begin{align*}
  & \E{}{A_{s, *}\T \theta_{s, *} - A_{s, t}\T \theta_{s, *} \mid H_t} = \\
  & \E{}{A_{s, *}\T (\theta_{s, *} - \hat{\mu}_{s, t}) \mid H_t} +
  \E{}{A_{s, t}\T (\hat{\mu}_{s, t}- \theta_{s, *}) \mid H_t}\,.
\end{align*}
Conditioned on history $H_t$, we observe that $\hat{\mu}_{s, t} - \theta_{s, *}$ is a zero-mean random vector and that $A_{s, t}$ is independent of it. Hence $\condE{A_{s, t}\T (\hat{\mu}_{s, t} - \theta_{s, *})}{H_t} = 0$ and the Bayes regret is bounded as
\begin{align*}
  \Bregret(m, n)
  \leq \E{}{\sum_{t \geq 1}\sum_{s \in \cS_t}
  \E{}{A_{s, *}\T (\theta_{s, *} - \hat{\mu}_{s, t}) \mid H_t}}\,.
\end{align*}
The following lemma provides an upper bound on the Bayes regret for $m$ tasks, with at most $n$ interactions with each, using the sum of posterior variances
\begin{align}
  \cV(m, n)
  = \E{}{\sum_{t \geq 1} \sum_{s \in \cS_t} \normw{A_{s, t}}{\hat{\Sigma}_{s, t}}^2}\,.
  \label{eq:posterior variances}
\end{align}
The proof is deferred to \cref{sec:bayes regret proof}.

\begin{lemma}
\label{lem:bayes regret} For any $\delta > 0$, the Bayes regret $\Bregret(m, n)$ in a hierarchical linear bandit (\cref{sec:linear bandit}) is bounded by
\begin{align*}
  \sqrt{2 d m n \cV(m, n) \log(1 / \delta)} +
  \sqrt{2 / \pi} \sigma_{\max} d^\frac{3}{2} m n \delta\,,
\end{align*}
where $\sigma_{\max}^2 = \lambda_1(\Sigma_0) + \lambda_1^2(\Sigma_0) \lambda_1(\Sigma_q) / \lambda_d^2(\Sigma_0)$. When the action space is finite, $|\cA| = K$, we also get
\begin{align*}
  \sqrt{2 m n \cV(m, n) \log(1 / \delta)} +
  \sqrt{2 / \pi} \sigma_{\max} K m n \delta\,.
\end{align*}
\end{lemma}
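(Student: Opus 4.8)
The plan is to start from the reduced inequality derived just above, namely that $\Bregret(m,n)$ is at most $\E{}{\sum_{t\ge1}\sum_{s\in\cS_t}\condE{A_{s,*}\T(\theta_{s,*}-\hat{\mu}_{s,t})}{H_t}}$, and to control each inner term separately. Fix a round $t$ and a task $s\in\cS_t$, and condition on $H_t$, so that $\theta_{s,*}\sim\cN(\hat{\mu}_{s,t},\hat{\Sigma}_{s,t})$ with $\hat{\Sigma}_{s,t}$ being $H_t$-measurable. For every outcome I would bound the term by a \emph{confidence part} plus a nonnegative \emph{overshoot}: in the continuous case, Cauchy--Schwarz gives $A_{s,*}\T(\theta_{s,*}-\hat{\mu}_{s,t})\le\sqrt{2d\log(1/\delta)}\,\normw{A_{s,*}}{\hat{\Sigma}_{s,t}}+\big(\normw{A_{s,*}}{\hat{\Sigma}_{s,t}}\,(\normw{\theta_{s,*}-\hat{\mu}_{s,t}}{\hat{\Sigma}_{s,t}^{-1}}-\sqrt{2d\log(1/\delta)})\big)^{+}$, obtained from the fact that, after whitening, $\normw{\theta_{s,*}-\hat{\mu}_{s,t}}{\hat{\Sigma}_{s,t}^{-1}}^2$ is a sum of $d$ independent squared standard normals; in the finite case I instead keep the per-action confidence bound $|a\T(\theta_{s,*}-\hat{\mu}_{s,t})|\le\sqrt{2\log(1/\delta)}\,\normw{a}{\hat{\Sigma}_{s,t}}$ and a union bound over the $K$ actions, which removes the $\sqrt{d}$ from the confidence radius.

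The confidence part is handled by \emph{posterior-sampling symmetry}. Conditioned on $H_t$, the unknown $\theta_{s,*}$ and the \hierts sample $\theta_{s,t}$ are i.i.d., and $A_{s,*}$ and $A_{s,t}$ are the same deterministic $\argmax$ applied to them; since $\hat{\Sigma}_{s,t}$ is $H_t$-measurable, $\condE{\normw{A_{s,*}}{\hat{\Sigma}_{s,t}}}{H_t}=\condE{\normw{A_{s,t}}{\hat{\Sigma}_{s,t}}}{H_t}$, which replaces the unobserved optimal action by the played one. Summing over $t$ and $s$, taking the outer expectation, and applying Cauchy--Schwarz together with Jensen over the at most $mn$ interactions yields $\E{}{\sum_{t\ge1}\sum_{s\in\cS_t}\normw{A_{s,t}}{\hat{\Sigma}_{s,t}}}\le\sqrt{mn\,\cV(m,n)}$, so the confidence contribution is exactly $\sqrt{2dmn\,\cV(m,n)\log(1/\delta)}$ (and $\sqrt{2mn\,\cV(m,n)\log(1/\delta)}$ in the finite case).

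For the overshoot I would use the uniform bound $\normw{A_{s,*}}{\hat{\Sigma}_{s,t}}\le\sigma_{\max}$ and reduce to a truncated Gaussian moment of the form $\condE{|Z|\,\I{|Z|>\sqrt{2\log(1/\delta)}}}{}=\sqrt{2/\pi}\,\delta$, the constant $\sqrt{2/\pi}$ being the Gaussian mean absolute deviation. In the finite case this is summed directly over the $K$ actions; in the continuous case I would split $\normw{\theta_{s,*}-\hat{\mu}_{s,t}}{\hat{\Sigma}_{s,t}^{-1}}\le|Z_j|+\normw{Z_{-j}}{}$ along each of the $d$ whitened coordinates and use $\E{}{\normw{Z_{-j}}{}}\le\sqrt{d}$, which produces the $d^{3/2}$ factor. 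Summing over the $\le mn$ interactions gives the stated $\sqrt{2/\pi}\,\sigma_{\max}d^{3/2}mn\delta$ (respectively $\sqrt{2/\pi}\,\sigma_{\max}Kmn\delta$) tail term.

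The main obstacle is the uniform bound $\normw{A_{s,*}}{\hat{\Sigma}_{s,t}}^2\le\lambda_1(\hat{\Sigma}_{s,t})\le\sigma_{\max}^2$ (under the standard normalization $\norm{a}_2\le 1$), since the marginal posterior covariance $\hat{\Sigma}_{s,t}$ of $\theta_{s,*}$ is never available in closed form. I would obtain it from the total covariance decomposition $\hat{\Sigma}_{s,t}=\condcov{\theta_{s,*}}{\mu_*,H_t}+\condcov{\condE{\theta_{s,*}}{\mu_*,H_t}}{H_t}$. The first, within-task term equals $\tilde{\Sigma}_{s,t}=(\Sigma_0^{-1}+G_{s,t})^{-1}\preceq\Sigma_0$ by \eqref{eq:linear conditional}, contributing $\lambda_1(\Sigma_0)$. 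The second, hyper-parameter term is the hyper-posterior covariance $\bar{\Sigma}_t\preceq\Sigma_q$ (from \eqref{eq:linear hyperposterior}) propagated through the affine map $\mu_t\mapsto\tilde{\mu}_{s,t}$, whose linear part $\tilde{\Sigma}_{s,t}\Sigma_0^{-1}$ has operator norm at most $\lambda_1(\Sigma_0)/\lambda_d(\Sigma_0)$; it therefore contributes at most $\lambda_1^2(\Sigma_0)\lambda_1(\Sigma_q)/\lambda_d^2(\Sigma_0)$. Adding the two eigenvalue bounds gives exactly $\sigma_{\max}^2$, and this is the step where the hierarchical structure, rather than a single flat posterior, genuinely enters the argument.
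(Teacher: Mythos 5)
Your proposal follows the paper's proof almost step for step: the same reduction via posterior-sampling symmetry ($A_{s,*}$ and $A_{s,t}$ are i.i.d.\ given $H_t$, so $\condE{\normw{A_{s,*}}{\hat{\Sigma}_{s,t}}}{H_t} = \condE{\normw{A_{s,t}}{\hat{\Sigma}_{s,t}}}{H_t}$), the same Cauchy--Schwarz/Jensen step yielding $\sqrt{2 d m n \cV(m, n) \log(1/\delta)}$, the same per-action union bound in the finite case, and the same bound $\lambda_1(\hat{\Sigma}_{s,t}) \leq \lambda_1(\Sigma_0) + \lambda_1^2(\Sigma_0)\lambda_1(\Sigma_q)/\lambda_d^2(\Sigma_0)$ obtained by combining the total covariance decomposition with $\tilde{\Sigma}_{s,t} \preceq \Sigma_0$ and $\bar{\Sigma}_t \preceq \Sigma_q$; this is exactly how the paper uses \cref{lem:covariance decomposition}, \eqref{eq:covariance decomposition}, \eqref{eq:linear hyperposterior}, and Weyl's inequalities.

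The one place you deviate is the continuous-case overshoot, and there your argument falls short of the stated constant. Writing $Z = \hat{\Sigma}_{s,t}^{-1/2}(\theta_{s,*} - \hat{\mu}_{s,t})$ for the whitened error and $B_j = \{|Z_j| > \sqrt{2\log(1/\delta)}\}$, your split $\normw{Z}{2} \leq |Z_j| + \normw{Z_{-j}}{2}$ over the $d$ bad-coordinate events gives $\sum_{j=1}^d \left(\E{}{|Z_j| \I{B_j}} + \E{}{\normw{Z_{-j}}{2}} \prob{B_j}\right) \leq d\left(\sqrt{2/\pi} + \sqrt{d-1}\right)\delta$: the second piece carries constant $1$ rather than $\sqrt{2/\pi}$, and since $\sqrt{2/\pi} < 1$ this strictly exceeds the claimed $\sqrt{2/\pi}\, d^{3/2} \delta$ for every $d \geq 2$ (by a factor approaching $\sqrt{\pi/2} \approx 1.25$ for large $d$, and up to about $2.25$ for small $d$). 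The paper avoids this by never splitting off $\normw{Z_{-j}}{2}$: it first bounds $\normw{Z}{2} \leq \sqrt{d}\, \maxnorm{Z}$, observes that the bad event forces $\maxnorm{Z} \geq \sqrt{2\log(1/\delta)}$, and then bounds $\condE{\maxnorm{Z} \I{\maxnorm{Z} \geq \sqrt{2\log(1/\delta)}}}{H_t} \leq \sum_{i=1}^d \condE{|Z_i| \I{|Z_i| \geq \sqrt{2\log(1/\delta)}}}{H_t} = \sqrt{2/\pi}\, d \delta$, so every term inherits the truncated-moment constant $\sqrt{2/\pi}$ and the total is exactly $\sqrt{2/\pi}\, \sigma_{\max} d^{3/2} \delta$. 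This is a constant-factor shortfall in the (negligible) tail term rather than a flaw in the approach; substituting the max-norm argument for your coordinate split makes your proof identical to the paper's.
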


Therefore, to bound the regret, we only need to bound the posterior variances induced by the taken actions. The main challenge is that our posterior is over multiple variables. As can be seen in \eqref{eq:task posterior}, it comprises the hyper-posterior $Q_t$ over $\mu_*$ and the conditional $P_{s, t}$ over $\theta_{s, *}$. For any fixed $\mu_*$, $P_{s, t}(\cdot \mid \mu_*)$ should concentrate at $\theta_{s, *}$ as the agent gets more observations from task $s$. In addition, $Q_t$ should concentrate at $\mu_*$ as the agent learns more about $\mu_*$ from all tasks.

\subsection{Total Variance Decomposition}
\label{sec:total variance decomposition}

Due to the hierarchical structure of our problem, it is difficult to reason about the rate at which $\hat{\Sigma}_{s, t}$ \say{decreases}. In this work, we propose a novel variance decomposition that allows this. The decomposition uses the law of total variance \citep{weiss05probability}, which states that for any $X$ and $Y$,
\begin{align*}
  \var{X}
  = \E{}{\condvar{X}{Y}} + \var{\condE{X}{Y}}\,.
\end{align*}
If $X = \theta$ was a scalar task parameter and $Y = \mu$ was a scalar hyper-parameter, and we conditioned on $H$, the law would give
\begin{align*}
  \condvar{\theta}{H}
  = \condE{\condvar{\theta}{\mu, H}}{H} +
  \condvar{\condE{\theta}{\mu, H}}{H}\,.
\end{align*}
This law extends to covariances \citep{weiss05probability}, where the conditional variance $\condvar{\cdot}{H}$ is substituted with the covariance $\condcov{\cdot}{H}$. We show the decomposition for a hierarchical Gaussian model below.

\subsection{Hierarchical Gaussian Models}

Recall that $\hat{\Sigma}_{s, t} = \condcov{\theta_{s, *}}{H_t}$. We derive a general formula for decomposing $\condcov{\theta_{s, *}}{H_t}$ below. To simplify notation, we consider a fixed task $s$ and round $t$, and drop subindexing by them.

\begin{lemma}
\label{lem:covariance decomposition} Let $\theta \mid \mu \sim \cN(\mu, \Sigma_0)$ and $H = (x_t, Y_t)_{t = 1}^n$ be $n$ observations generated as $Y_t \mid \theta, x_t \sim \cN(x_t\T \theta, \sigma^2)$. Let $\condprob{\mu}{H} = \cN(\mu; \bar{\mu}, \bar{\Sigma})$. Then
\begin{align*}
  \condcov{\theta}{H}
  = {} & (\Sigma_0^{-1} + G)^{-1} + {} \\
  & (\Sigma_0^{-1} + G)^{-1} \Sigma_0^{-1} \bar{\Sigma}
  \Sigma_0^{-1} (\Sigma_0^{-1} + G)^{-1}\,,
\end{align*}
where $G = \sigma^{-2} \sum_{t = 1}^n x_t x_t\T$. Moreover, for any $x \in \realset^d$,
\begin{align*}
  & x\T (\Sigma_0^{-1} + G)^{-1} \Sigma_0^{-1} \bar{\Sigma}
  \Sigma_0^{-1} (\Sigma_0^{-1} + G)^{-1} x \\
  & \quad \leq \frac{\lambda_1^2(\Sigma_0) \lambda_1(\bar{\Sigma})}{\lambda_d^2(\Sigma_0)}
  \normw{x}{2}^2\,.
\end{align*}
\end{lemma}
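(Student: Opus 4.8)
The plan is to apply the law of total covariance from \cref{sec:total variance decomposition}, conditioning the task parameter $\theta$ on the hyper-parameter $\mu$ and the history $H$, which splits $\condcov{\theta}{H}$ into $\condE{\condcov{\theta}{\mu, H}}{H} + \condcov{\condE{\theta}{\mu, H}}{H}$; I then compute each term from the closed-form Gaussian posteriors already recorded in \eqref{eq:linear conditional}. First I would note that, conditioned on a fixed $\mu$, the model is an ordinary Bayesian linear regression with prior $\cN(\mu, \Sigma_0)$ and Gaussian observations, so $\theta \mid \mu, H \sim \cN(\tilde{\mu}, \tilde{\Sigma})$ with $\tilde{\Sigma} = (\Sigma_0^{-1} + G)^{-1}$ and $\tilde{\mu} = (\Sigma_0^{-1} + G)^{-1}(\Sigma_0^{-1} \mu + B)$, exactly as in \eqref{eq:linear conditional}. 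The key observation is that $\tilde{\Sigma}$ does not depend on $\mu$, so its conditional expectation given $H$ is simply $(\Sigma_0^{-1} + G)^{-1}$, producing the first term of the claimed decomposition.

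For the second term, I would exploit that $\condE{\theta}{\mu, H} = \tilde{\mu}$ is affine in $\mu$, equal to $(\Sigma_0^{-1} + G)^{-1} \Sigma_0^{-1} \mu + (\Sigma_0^{-1} + G)^{-1} B$, where only the first summand is random given $H$. Writing $M = (\Sigma_0^{-1} + G)^{-1} \Sigma_0^{-1}$ and using $\condcov{M \mu + c}{H} = M \, \condcov{\mu}{H} \, M^\top = M \bar{\Sigma} M^\top$ (since $\condprob{\mu}{H} = \cN(\bar{\mu}, \bar{\Sigma})$, and $M$, as a product of symmetric matrices, satisfies $M^\top = \Sigma_0^{-1} (\Sigma_0^{-1} + G)^{-1}$) yields exactly the second term. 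Adding the two contributions gives the stated decomposition.

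For the quadratic-form bound, I would write the second term as $x^\top M \bar{\Sigma} M^\top x = (M^\top x)^\top \bar{\Sigma} (M^\top x) \leq \lambda_1(\bar{\Sigma}) \normw{M^\top x}{2}^2 \leq \lambda_1(\bar{\Sigma}) \|M\|_2^2 \normw{x}{2}^2$, where $\|M\|_2$ denotes the operator norm. It then remains to bound $\|M\|_2$ by submultiplicativity: $\|M\|_2 \leq \|(\Sigma_0^{-1} + G)^{-1}\|_2 \, \|\Sigma_0^{-1}\|_2$, with $\|\Sigma_0^{-1}\|_2 = 1 / \lambda_d(\Sigma_0)$; and since $G \succeq 0$ gives $\Sigma_0^{-1} + G \succeq \Sigma_0^{-1}$ and hence $\lambda_d(\Sigma_0^{-1} + G) \geq \lambda_d(\Sigma_0^{-1}) = 1 / \lambda_1(\Sigma_0)$, we get $\|(\Sigma_0^{-1} + G)^{-1}\|_2 = 1 / \lambda_d(\Sigma_0^{-1} + G) \leq \lambda_1(\Sigma_0)$. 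Combining yields $\|M\|_2^2 \leq \lambda_1^2(\Sigma_0) / \lambda_d^2(\Sigma_0)$, which is the claimed factor.

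I expect the main obstacle to be conceptual rather than computational: correctly recognizing that $\condcov{\theta}{\mu, H}$ is \emph{constant} in $\mu$ (a feature special to the conjugate Gaussian setting), so that it passes through the outer expectation unchanged, and that $\condE{\theta}{\mu, H}$ is \emph{affine} in $\mu$, so that the total-covariance formula collapses to the exact linear propagation $M \bar{\Sigma} M^\top$. Once these two structural facts are in place, both the decomposition and the eigenvalue bound follow from routine matrix manipulations.
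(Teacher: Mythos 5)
Your proposal is correct and follows essentially the same route as the paper: the law of total covariance conditioned on $(\mu, H)$, the observation that $\condcov{\theta}{\mu, H} = (\Sigma_0^{-1} + G)^{-1}$ is constant in $\mu$ while $\condE{\theta}{\mu, H}$ is affine in $\mu$ with only the $(\Sigma_0^{-1} + G)^{-1}\Sigma_0^{-1}\mu$ part random given $H$, yielding the two terms of the decomposition. Your explicit submultiplicativity argument for the second claim simply fills in what the paper dismisses as \say{standard norm and eigenvalue inequalities}, and it is correct.
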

\begin{proof}
By definition,
\begin{align*}
  \condcov{\theta}{\mu, H}
  & = (\Sigma_0^{-1} + G)^{-1}\,, \\
  \condE{\theta}{\mu, H}
  & = \condcov{\theta}{\mu, H}
  (\Sigma_0^{-1} \mu + B)\,,
\end{align*}
where $B = \sigma^{-2}\sum_{t = 1}^n x_t Y_t$. Because $\condcov{\theta}{\mu, H}$ does not depend on $\mu$, $\condE{\condcov{\theta}{\mu, H}}{H} = \condcov{\theta}{\mu, H}$. In addition, since $B$ is a constant conditioned on $H$,
\begin{align*}
  & \condcov{\condE{\theta}{\mu, H}}{H} \\
  & \quad = \condcov{\condcov{\theta}{\mu, H} \Sigma_0^{-1} \mu}{H} \\
  & \quad = (\Sigma_0^{-1} + G)^{-1} \Sigma_0^{-1} \bar{\Sigma}
  \Sigma_0^{-1} (\Sigma_0^{-1} + G)^{-1}\,.
\end{align*}
This proves the first claim. The second claim follows from standard norm and eigenvalue inequalities.
\end{proof}

We use \cref{lem:covariance decomposition} as follows. For task $s$ and round $t$, the posterior covariance decomposes as
\begin{align}
  \hat{\Sigma}_{s, t}
  = {} & (\Sigma_0^{-1} + G_{s, t})^{-1} + {}
  \label{eq:covariance decomposition} \\
  & (\Sigma_0^{-1} + G_{s, t})^{-1} \Sigma_0^{-1} \bar{\Sigma}_t
  \Sigma_0^{-1} (\Sigma_0^{-1} + G_{s, t})^{-1}\,.
  \nonumber
\end{align}
The first term is $\condcov{\theta_{s, *}}{\mu_*, H_t}$ and captures uncertainty in $\theta_{s, *}$ conditioned on $\mu_*$. The second term depends on hyper-posterior covariance $\bar{\Sigma}_t$ and represents uncertainty in $\mu_*$. Since the first term is exactly $\tilde{\Sigma}_{s, t}$ in \eqref{eq:linear conditional}, while the second term is weighted by it, both are small when we get enough observations for task $s$. The above also says that $\normw{A_{s, t}}{\hat{\Sigma}_{s, t}}^2 = A_{s, t}\T \hat{\Sigma}_{s, t} A_{s, t}$ in \eqref{eq:posterior variances} decompose into the two respective norms, which yields our regret decomposition.

\subsection{Extensions}
\label{sec:extensions}

So far, we only focused on hierarchical Gaussian models with known hyper-prior and task prior covariances. This is only because they have closed-form posteriors that are easy to interpret and manipulate, without resorting to approximations \citep{doucet01sequential}. This choice simplifies algebra and allows us to focus on the key hierarchical structure of our problem. We believe that the tools developed in this section can be applied more broadly. We discuss this next.

\cref{lem:bayes regret} decomposes the Bayes regret into posterior variances and upper bounds on the regret due to tail events. The posterior variance can be derived for any exponential-family posterior with a conjugate prior. On the other hand, the tail inequalities require sub-Gaussianity, which is a property of many exponential-family distributions.

\cref{lem:covariance decomposition} decomposes the posterior covariance in a hierarchical Gaussian model. It relies on the law of total covariance, which holds for any distribution, to obtain the task and hyper-parameter uncertainties. We expect that similar lemmas can be proved for other hierarchical models, so long as closed-form expressions for the respective uncertainties exist. Another notable property of our decomposition is that it does not require the marginal posterior of $\theta_{s, *}$. We view it as a strength. It means that our approach can be applied to complex graphical models where the marginal uncertainty may be hard to express, but the conditional and prior uncertainties are readily available.

One limitation of our analyses is that we bound the Bayes regret, instead of a stronger frequentist regret. This simplifies our proofs while they still capture our problem structure. Our analyses can be extended to the frequentist setting. This only requires a new proof of \cref{lem:bayes regret}, with martingale bounds for tail events and anti-concentration bounds for posterior sampling. The rest of the analysis, where our main contributions are, would not change.

\section{REGRET BOUNDS}
\label{sec:regret bounds}

This section bounds the Bayes regret of \hierts in the linear bandit in \cref{sec:linear bandit}. Our bounds are specialized to multi-armed bandits in \cref{sec:mab bounds}. The key idea is to bound the posterior variances $\cV(m, n)$ in \eqref{eq:posterior variances} and then substitute the bound into the infinite-action bound in \cref{lem:bayes regret}. We bound the variances using the total covariance decomposition in \cref{sec:total variance decomposition}. Without loss of generality, we assume that the action set $\cA$ is a subset of a unit ball, that is $\max_{a \in \cA} \normw{a}{2} \leq 1$ for any action $a \in \cA$.

We make the following contributions in theory. First, we prove regret bounds using a novel variance decomposition (\cref{sec:total variance decomposition}), which improves in constant factors over classical information-theory bounds \citep{russo16information}. Second, we prove the first Bayes regret bound for the setting where an agent that interacts with multiple tasks simultaneously.

This section has two parts. In \cref{sec:sequential regret}, we assume that only one action is taken in any round $t$, $|\cS_t| = 1$. We call this setting \emph{sequential}, and note that it is the primary setting studied by prior works \citep{kveton21metathompson,basu21noregrets}. In \cref{sec:concurrent regret}, we focus on a \emph{concurrent} setting, where a single action can be taken in up to $L$ tasks in any round $t$, $|\cS_t| \leq L \leq m$. The challenge of this setting is that the task parameters are only updated after all actions are taken.

\subsection{Sequential Regret}
\label{sec:sequential regret}

The following theorem provides a regret bound for the sequential setting.

\begin{theorem}[Sequential regret]
\label{thm:sequential regret} Let $|\cS_t| = 1$ for all rounds $t$ and $\cA \subseteq \realset^d$. Choose $\delta = 1 / (m n)$. Then the Bayes regret of \hierts is
\begin{align*}
  \Bregret(m, n)
  \leq d \sqrt{2 m n [c_1 m + c_2] \log(m n)} + c_3\,,
\end{align*}
where $c_3 = O(d^\frac{3}{2})$,
\begin{align*}
  c_1
  & = \frac{\lambda_1(\Sigma_0)}{\log(1 + \sigma^{-2} \lambda_1(\Sigma_0))}
  \log\left(1 + \frac{\lambda_1(\Sigma_0) n}{\sigma^2 d}\right)\,, \\
  c_2
  & = \frac{c_q c}{\log(1 + \sigma^{-2} c_q)}
  \log\left(1 + \frac{\lambda_1(\Sigma_q) m}{\lambda_d(\Sigma_0)}\right)\,, \\
  c_q
  & = \frac{\lambda_1^2(\Sigma_0) \lambda_1(\Sigma_q)}{\lambda_d^2(\Sigma_0)}, \quad 
  c = 1 + \sigma^{-2} \lambda_1(\Sigma_0)\,.
\end{align*}
\end{theorem}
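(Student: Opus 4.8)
The plan is to reduce the Bayes regret to the summed posterior variances $\cV(m, n)$ via \cref{lem:bayes regret}, decompose each posterior variance into a per-task and a hyper-parameter contribution using \cref{lem:covariance decomposition}, and bound each contribution by a separate elliptical-potential (log-determinant) argument. Concretely, I first instantiate \cref{lem:bayes regret} with $\delta = 1 / (m n)$; the tail term becomes $\sqrt{2 / \pi}\, \sigma_{\max} d^{3/2} m n \delta = \sqrt{2 / \pi}\, \sigma_{\max} d^{3/2} = c_3 = O(d^{3/2})$, so it only remains to show $\cV(m, n) \le d(c_1 m + c_2)$, since substituting this into $\sqrt{2 d m n \cV(m, n) \log(m n)}$ produces the leading term $d \sqrt{2 m n [c_1 m + c_2] \log(m n)}$.

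Next, using \eqref{eq:covariance decomposition} I split each summand of \eqref{eq:posterior variances} as $\normw{A_{s, t}}{\hat{\Sigma}_{s, t}}^2 = V_1(s, t) + V_2(s, t)$, where $V_1(s, t) = A_{s, t}\T (\Sigma_0^{-1} + G_{s, t})^{-1} A_{s, t}$ is the per-task uncertainty and $V_2(s, t) = \normw{w_{s, t}}{\bar{\Sigma}_t}^2$ with $w_{s, t} = \Sigma_0^{-1} (\Sigma_0^{-1} + G_{s, t})^{-1} A_{s, t}$ is the hyper-parameter uncertainty. For $V_1$, I fix a task $s$ and note that $(\Sigma_0^{-1} + G_{s, t})^{-1}$ is exactly the conditional posterior covariance $\tilde{\Sigma}_{s, t}$ of a standard linear bandit with prior covariance $\Sigma_0$ and noise $\sigma$, cf.\ \eqref{eq:linear conditional}. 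The matrix-determinant identity gives $\det \tilde{\Sigma}_{s, t+1}^{-1} = \det \tilde{\Sigma}_{s, t}^{-1} (1 + \sigma^{-2} V_1(s, t))$ whenever $s$ is played; combining the inequality $x \le \frac{C}{\log(1 + C)} \log(1 + x)$ for $x \in [0, C]$ with $C = \sigma^{-2} \lambda_1(\Sigma_0)$ (valid since $V_1(s, t) \le A_{s, t}\T \Sigma_0 A_{s, t} \le \lambda_1(\Sigma_0)$) and the trace--determinant bound $\log(\det \tilde{\Sigma}_{s, n+1}^{-1} / \det \Sigma_0^{-1}) \le d \log(1 + \lambda_1(\Sigma_0) n / (\sigma^2 d))$ shows each task contributes at most $d c_1$. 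Summing over the $m$ tasks gives $\sum_{s, t} V_1(s, t) \le d c_1 m$.

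The main work is the hyper-parameter term $V_2$, which is the novel part. Here the difficulty is that the hyper-posterior $\bar{\Sigma}_t$ in \eqref{eq:linear hyperposterior} is updated indirectly, through the per-task statistic $G_{s_t, t}$ of the single task $s_t$ played in round $t$, rather than by a direct rank-one action update. The key fact I would establish is that this indirect update is nonetheless exactly rank one in the relevant direction: writing the per-task hyper-precision contribution via Woodbury as $(\Sigma_0 + G^{-1})^{-1} = \Sigma_0^{-1} - \Sigma_0^{-1} (\Sigma_0^{-1} + G)^{-1} \Sigma_0^{-1}$ and applying Sherman--Morrison to the rank-one change $G_{s_t, t+1} = G_{s_t, t} + \sigma^{-2} A_{s_t, t} A_{s_t, t}\T$, the hyper-precision increment equals $\bar{\Sigma}_{t+1}^{-1} - \bar{\Sigma}_t^{-1} = \beta_t\, w_{s_t, t} w_{s_t, t}\T$ with $\beta_t = \sigma^{-2} / (1 + \sigma^{-2} A_{s_t, t}\T (\Sigma_0^{-1} + G_{s_t, t})^{-1} A_{s_t, t})$, and $w_{s_t, t}$ is precisely the vector appearing in $V_2$. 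This alignment lets me apply the matrix-determinant identity at the hyper level, $\det \bar{\Sigma}_{t+1}^{-1} = \det \bar{\Sigma}_t^{-1} (1 + \beta_t V_2(s_t, t))$. Bounding $\beta_t \ge \sigma^{-2} / c$ with $c = 1 + \sigma^{-2} \lambda_1(\Sigma_0)$ (since $A\T (\Sigma_0^{-1} + G)^{-1} A \le \lambda_1(\Sigma_0)$), bounding $V_2(s_t, t) \le c_q$ via the second claim of \cref{lem:covariance decomposition} together with $\lambda_1(\bar{\Sigma}_t) \le \lambda_1(\Sigma_q)$, applying the same $x / \log(1 + x)$ inequality with $C = \sigma^{-2} c_q$, and finally using that each task's contribution to the hyper-precision saturates at $\Sigma_0^{-1} \preceq \lambda_d(\Sigma_0)^{-1} I$ — so that $\bar{\Sigma}_\infty^{-1} \preceq \Sigma_q^{-1} + m \lambda_d(\Sigma_0)^{-1} I$ and hence $\log(\det \bar{\Sigma}_\infty^{-1} / \det \Sigma_q^{-1}) \le d \log(1 + \lambda_1(\Sigma_q) m / \lambda_d(\Sigma_0))$ — telescopes to $\sum_t V_2(s_t, t) \le d c_2$.

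Combining the two bounds yields $\cV(m, n) \le d(c_1 m + c_2)$, which substituted into \cref{lem:bayes regret} gives the claim. I expect the hyper-level argument of the third paragraph to be the main obstacle: identifying that the indirect hyper-posterior update is rank one and exactly aligned with the direction of $V_2$ (which is what makes the determinant telescoping applicable), and recognizing that the per-task saturation at $\Sigma_0^{-1}$ is what replaces the total number of interactions $m n$ by the number of tasks $m$ inside the logarithm of $c_2$.
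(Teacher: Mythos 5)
Your proposal is correct and follows essentially the same route as the paper's own proof: reduction via \cref{lem:bayes regret}, the variance split from \cref{lem:covariance decomposition}, elliptical/log-determinant telescoping for the task term, and — crucially — the same Woodbury/Sherman--Morrison identity showing the hyper-precision update is rank one and aligned with $w_{s,t} = \Sigma_0^{-1}(\Sigma_0^{-1}+G_{s,t})^{-1}A_{s,t}$ (the paper's \eqref{eq:linear telescoping}), followed by the same saturation bound $(\Sigma_0 + G^{-1})^{-1} \preceq \Sigma_0^{-1}$ that puts $m$ rather than $mn$ inside the logarithm of $c_2$. Your scalar matrix-determinant-lemma phrasing and the constants $c$, $c_q$, $c_1$, $c_2$ all match the paper's argument.
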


The proof of \cref{thm:sequential regret} is based on three steps. First, we use \cref{lem:bayes regret}. Second, we employ \cref{lem:covariance decomposition} to decompose the posterior variance in any round into that of the task parameters and hyper-parameter. Finally, we apply elliptical lemmas to bound each term separately. \cref{thm:sequential regret} has a nice interpretation: $m \sqrt{c_1 n}$ is the regret for learning task parameters and $\sqrt{c_2 m n}$ is the regret for learning the hyper-parameter $\mu_*$. We elaborate on both terms below.

The term $m \sqrt{c_1 n}$ represents the regret for solving $m$ bandit tasks, which are sampled i.i.d.\ from a known prior $\cN(\mu_*, \Sigma_0)$. Under this assumption, no task provides information about any other task, and thus the term is linear in $m$. The constant $c_1$ is $O(\lambda_1(\Sigma_0))$ and reflects the dependence on the prior width $\sqrt{\lambda_1(\Sigma_0)}$. Roughly speaking, when the task prior is half as informative, $\sqrt{c_1}$ doubles and so does $m \sqrt{c_1 n}$. This is the expected scaling with conditional uncertainty of $\theta_{s, *}$ given $\mu_*$.

The term $\sqrt{c_2 m n}$ is the regret for learning the hyper-parameter. Asymptotically, it is $O(\sqrt{m})$ smaller than $m \sqrt{c_1 n}$. Therefore, for a large number of tasks $m$, its contribution to the total regret is negligible. This is why hierarchical Bayesian bandits perform so well in practice. The constant $c_2$ is $O(\lambda_1(\Sigma_q))$ and reflects the dependence on the hyper-prior width $\sqrt{\lambda_1(\Sigma_0)}$. When the hyper-prior is half as informative, $\sqrt{c_2}$ doubles and so does $\sqrt{c_2 m n}$. This is the expected scaling with the marginal uncertainty of $\mu_*$.

\subsection{Tightness of Regret Bounds}
\label{sec:tightness}

One shortcoming of our current analysis is that we do not provide a matching lower bound.
To the best of our knowledge, Bayes regret lower bounds are rare and do not match existing upper bounds. The only lower bound that we are aware of is $\Omega(\log^2 n)$ in Theorem 3 of \citet{lai87adaptive}. The bound is for $K$-armed bandits and it is unclear how to apply it to structured problems. Seminal works on Bayes regret minimization \citep{russo14learning,russo16information} do not match it. Therefore, to show that our problem structure is reflected in our bound, we compare the regret of \hierts to baselines that have more information or use less structure.

Now we compare the regret of \hierts to two \lints \citep{agrawal13thompson} baselines that do use our hierarchical model. This first is an oracle \lints that knows $\mu_*$, and so has more information than \hierts. Its Bayes regret would be as in \cref{thm:sequential regret} with $c_2 = 0$. Not surprisingly, it is lower than that of \hierts. The second baseline is \lints that knows that $\mu_* \sim \cN(\mu_q, \Sigma_q)$, but does not model the structure that the tasks share $\mu_*$. In this case, each task parameter can be viewed as having prior $\cN(\mu_q, \Sigma_q + \Sigma_0)$. The regret of this algorithm would be as in \cref{thm:sequential regret} with $c_2 = 0$, while $\lambda_1(\Sigma_0)$ in $c_1$ would be $\lambda_1(\Sigma_q + \Sigma_0)$. Since $c_1$ is multiplied by $m$ while $c_2$ is not, \hierts would have lower regret as $m \to \infty$. This is a powerful testament to the benefit of learning the hyper-parameter.

Finally, we want to comment on linear dependence in $d$ and $m$ in \cref{thm:sequential regret}. The dependence on $d$ is standard in Bayes regret analyses for linear bandits with infinitely many arms \citep{russo14learning,lu19informationtheoretic}. As for the number of tasks $m$, since the tasks are drawn i.i.d.\ from the same hyper-prior, they do not provide any additional information about each other. So, even if the hyper-parameter is known, the regret for learning to act in $m$ tasks with $n$ rounds would be $O(m \sqrt{n})$. Our improvements are in constants due to better variance attribution. Other bandit meta-learning works \citep{kveton21metathompson,basu21noregrets} made similar observations. Also note that the frequentist regret of \lints applied to $m$ independent linear bandit tasks is $\tilde{O}(m d^\frac{3}{2} \sqrt{n})$ \citep{agrawal13thompson}. This is worse by a factor of $\sqrt{d}$ than the bound in \cref{thm:sequential regret}.

\subsection{Concurrent Regret}
\label{sec:concurrent regret}

Now we investigate the concurrent setting, where the agent acts in up to $L$ tasks per round. This setting is challenging because the hyper-posterior $Q_t$ is not updated until the end of the round. This is because the task posteriors are not refined with the observations from concurrent tasks. This delayed feedback should increase regret. Before we show it, we make the following assumption on the action space.

\begin{assumption}
\label{ass:basis} There exist actions $\{a_i\}_{i = 1}^d \subseteq \cA$ and $\eta > 0$ such that $\lambda_d(\sum_{i = 1}^d a_i a_i\T) \geq \eta$.
\end{assumption}

This assumption is without loss of generality. Specifically, if $\realset^d$ was not spanned by actions in $\cA$, we could project $\cA$ into a subspace where the assumption would hold. Our regret bound is below.

\begin{theorem}[Concurrent regret]
\label{thm:concurrent regret} Let $|\cS_t| \leq L \leq m$ and $\cA \subseteq \realset^d$. Let $\delta = 1 / (m n)$. Then the Bayes regret of \hierts is
\begin{align*}
  \Bregret(m, n)
  \leq d \sqrt{2 m n [c_1 m + c_2] \log(m n)} + c_3\,,
\end{align*}
where $c_1$, $c_q$, and $c$ are defined as in \cref{thm:sequential regret},
\begin{align*}
  c_2
  & = \frac{c_q c_4 c}{\log(1 + \sigma^{-2} c_q)}
  \log\left(1 + \frac{\lambda_1(\Sigma_q) m}{\lambda_d(\Sigma_0)}\right)\,, \\
  c_4
  & = 1 + \frac{\sigma^{-2} \lambda_1(\Sigma_q) (\lambda_1(\Sigma_0) + \sigma^2 / \eta)}
  {\lambda_1(\Sigma_q) + (\lambda_1(\Sigma_0) + \sigma^2 / \eta) / L}\,,
\end{align*}
and $c_3 = O(d^\frac{3}{2} m)$.
\end{theorem}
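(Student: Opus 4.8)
The plan is to follow the same three-step template as the proof of \cref{thm:sequential regret}, isolating the single place where concurrency enters. First I would invoke \cref{lem:bayes regret} (the infinite-action bound with $\delta = 1/(mn)$) to reduce $\Bregret(m,n)$ to controlling the aggregate posterior variance $\cV(m,n) = \E{}{\sum_{t\ge1}\sum_{s\in\cS_t}\normw{A_{s,t}}{\hat\Sigma_{s,t}}^2}$, up to the tail term. Then I would apply the total covariance decomposition \eqref{eq:covariance decomposition} to split each $\normw{A_{s,t}}{\hat\Sigma_{s,t}}^2$ into a task-uncertainty part $\normw{A_{s,t}}{(\Sigma_0^{-1}+G_{s,t})^{-1}}^2$ and a hyper-parameter part governed by $\bar\Sigma_t$. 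The crucial observation is that concurrency leaves the task-uncertainty part untouched: within a round each task $s\in\cS_t$ is played exactly once, so $G_{s,t}$ is always current with respect to task $s$'s own history, and the per-task elliptical lemma bounds $\sum_{t:\,s\in\cS_t}\normw{A_{s,t}}{(\Sigma_0^{-1}+G_{s,t})^{-1}}^2$ exactly as in the sequential case. Summing over the $m$ tasks reproduces the $c_1 m$ term verbatim.

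The real work is the hyper-parameter term $\sum_{t}\sum_{s\in\cS_t}\normw{w_{s,t}}{\bar\Sigma_t}^2$ with $w_{s,t} = \Sigma_0^{-1}(\Sigma_0^{-1}+G_{s,t})^{-1}A_{s,t}$. In the sequential analysis one runs an elliptical potential argument on the increasing precisions $\bar\Sigma_t^{-1} = \Sigma_q^{-1} + \sum_s(\Sigma_0 + G_{s,t}^{-1})^{-1}$, evaluating each summand at the covariance that already incorporates all earlier observations. Under concurrency this fails: all $\abs{\cS_t}\le L$ tasks of round $t$ sample from the same frozen $\bar\Sigma_t$, which is refreshed only at the end of the round. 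I would therefore compare the frozen covariance against the hypothetical sequentially-refreshed covariance within the round, establishing $\bar\Sigma_t \preceq c_4\,\bar\Sigma_t^{\text{(refreshed)}}$ in the relevant directions; feeding this into the usual log-determinant potential bound inflates $c_2$ by exactly $c_4$ and leaves everything else intact.

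To pin down $c_4$ I would first use \cref{ass:basis}: once a task has played a spanning set of $d$ actions, $G_{s,t}\succeq (\eta/\sigma^2) I$, hence $G_{s,t}^{-1}\preceq (\sigma^2/\eta) I$ and $\lambda_1(\Sigma_0 + G_{s,t}^{-1})\le a := \lambda_1(\Sigma_0)+\sigma^2/\eta$, so each such task contributes at least precision $a^{-1}I$ to the hyper-posterior. With up to $L$ of these contributions stacked on the prior precision $\Sigma_q^{-1}\succeq \lambda_1(\Sigma_q)^{-1}I$, the worst-direction frozen variance is at most $\lambda_1(\Sigma_q)a/(\lambda_1(\Sigma_q)L + a)$; multiplying by the $L$ concurrent summands and the per-observation precision scale $\sigma^{-2}$ yields precisely $c_4 = 1 + \sigma^{-2}\lambda_1(\Sigma_q)a/(\lambda_1(\Sigma_q) + a/L)$, which saturates as $L\to\infty$ because the shrinking per-task variance offsets the growing number of concurrent plays. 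Finally, the eigenvalue bound only holds post burn-in, so I would control the pre-burn-in rounds of each task by the crude per-round regret bound; accumulating this over the $m$ tasks is what turns the lower-order constant into $c_3 = O(d^{3/2} m)$. I expect the frozen-covariance potential argument to be the main obstacle, as it is the genuine departure from standard elliptical-lemma bookkeeping, whereas the remaining steps are mechanical adaptations of \cref{thm:sequential regret}.
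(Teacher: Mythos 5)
Your proposal is correct and follows essentially the same route as the paper's proof: reduce to $\cV(m,n)$ via \cref{lem:bayes regret}, split by \cref{lem:covariance decomposition}, note the task term is unaffected by concurrency, and handle the hyper-parameter term by comparing the frozen $\bar{\Sigma}_t$ against the within-round sequentially-refreshed covariance --- exactly the paper's \cref{lem:sequential concurrent ratio}, including your derivation of $c_4$ (stale-precision increment at most $L\sigma^{-2}$ over frozen precision at least $\lambda_1^{-1}(\Sigma_q) + L(\lambda_1(\Sigma_0)+\sigma^2/\eta)^{-1}$, using \cref{ass:basis} and an ordering in which sufficiently-explored tasks come first) and the $O(d^{\frac{3}{2}} m)$ burn-in accounting for $c_3$. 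No gaps.
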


The key step in the proof is to modify \hierts as follows. For the first $d$ interactions with any task $s$, we take actions $\{a_i\}_{i = 1}^d$. This guarantees that we explore all directions within the task, and allows us to bound losses from not updating the task posterior with concurrent observations. This modification of \hierts is trivial and analogous to popular initialization in bandits, where each arm is pulled once in the first rounds \citep{auer02finitetime}.

The regret bound in \cref{thm:concurrent regret} is similar to that in \cref{thm:sequential regret}. There are two key differences. First, the additional scaling factor $c_4$ in $c_2$ is the price for taking concurrent actions. It increases as more actions $L$ are taken concurrently, but is sublinear in $L$. Second, $c_3$ arises due to trivially bounding $dm$ rounds of forced exploration. To the best of our knowledge, \cref{thm:concurrent regret} is the first Bayes regret bound where multiple bandit tasks are solved concurrently. Prior works only proved frequentist regret bounds \citep{yang21impact}.

\section{EXPERIMENTS}
\label{sec:experiments}

\begin{figure*}[t!]
  \centering
  \begin{minipage}{0.32\textwidth}
    \includegraphics[width=\linewidth]{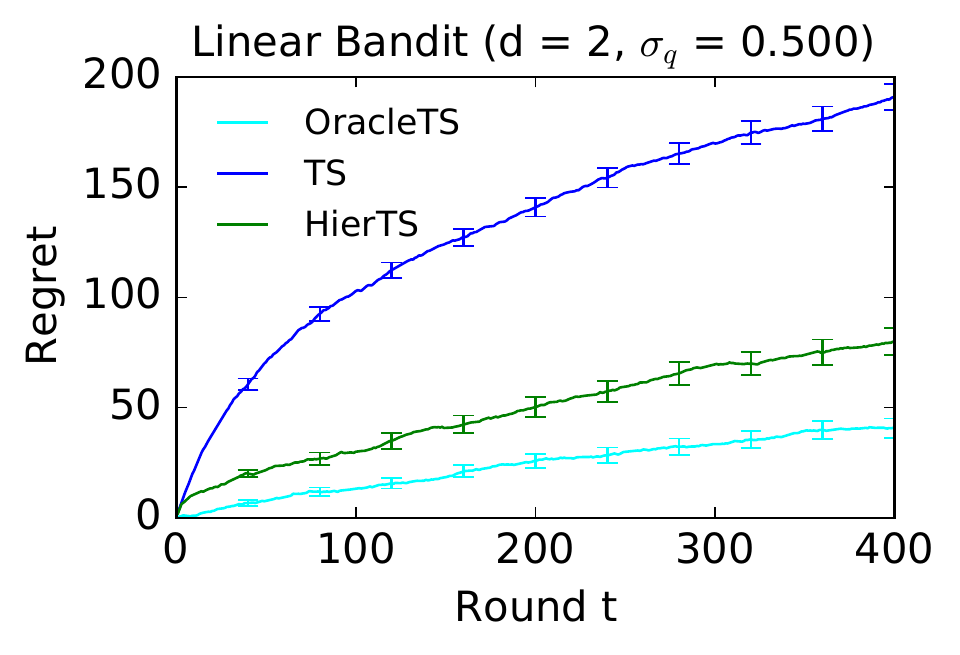}
  \end{minipage}
  \begin{minipage}{0.32\textwidth}
    \includegraphics[width=\linewidth]{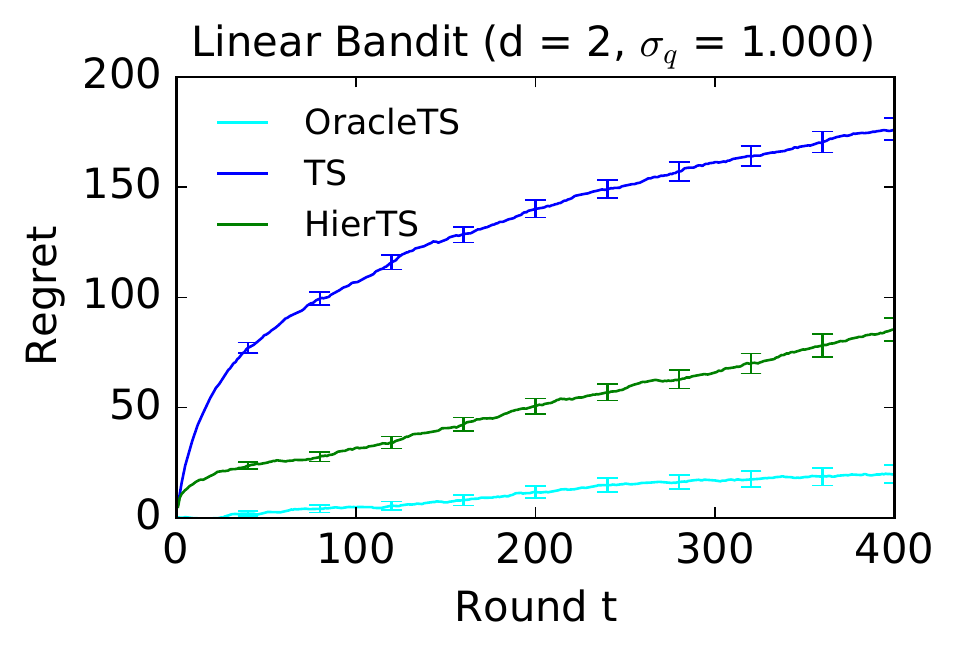}
  \end{minipage}
  \begin{minipage}{0.32\textwidth}
    \includegraphics[width=\linewidth]{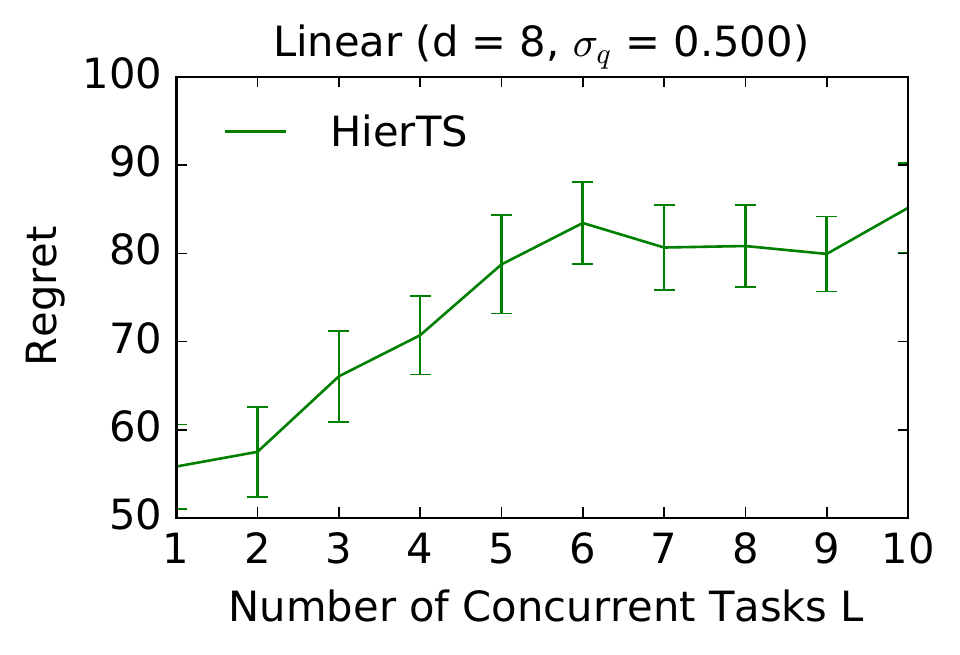}
  \end{minipage}
  \vspace{-0.1in}
  \caption{Evaluation of \hierts on synthetic bandit problems. From left to right, we report the Bayes regret (a) for smaller $\sigma_q$, (b) for larger $\sigma_q$, (c) and as a function of the number of concurrent tasks $L$.}
  \label{fig:synthetic}
\end{figure*}

We compare \hierts to two TS baselines (\cref{sec:tightness}) that do not learn the hyper-parameter $\mu_*$. The first baseline is an idealized algorithm that knows $\mu_*$ and uses the true prior $\cN(\mu_*, \Sigma_0)$. We call it \oraclets. As \oraclets has more information than \hierts, we expect it to outperform \hierts. The second baseline, which we call \ts, ignores that $\mu_*$ is shared among the tasks and uses the marginal prior of $\theta_{s, *}$, $\cN(\mu_q, \Sigma_q + \Sigma_0)$, in each task. 

We experiment with two linear bandit problems with $m = 10$ tasks: a synthetic problem with Gaussian rewards and an online image classification problem. The former is used to validate our regret bounds. The latter has non-Gaussian rewards and demonstrates that \hierts is robust to prior misspecification. Our setup closely follows \citet{basu21noregrets}. However, our tasks can arrive in an arbitrary order and in parallel. Due to space constraints, we only report the synthetic experiment here, and defer the rest to \cref{sec:classification experiments}.

The synthethic problem is defined as follows: $d = 2$, $\abs{\cA} = 10$, and each action is sampled uniformly from $[-0.5, 0.5]^d$. Initially, the number of concurrent tasks is $L = 5$; but we vary it later to measure its impact on regret. The number of rounds is $n = 200 m / L$ and $\cS_t$ is defined as follows. First, we take a random permutation of the list of tasks where each task appears exactly $200$ times. Then we batch every $L$ consecutive elements of the list and set $\cS_t$ to the $t$-th batch. The hyper-prior is $\cN(\mathbf{0}, \Sigma_q)$ with $\Sigma_q = \sigma_q^2 I_d$, the task covariance is $\Sigma_0 = \sigma_0^2 I_d$, and the reward noise is $\sigma = 0.5$. We choose $\sigma_q \in \set{0.5, 1}$ and $\sigma_0 = 0.1$, where $\sigma_q \gg \sigma_0$ so that the effect of learning $\mu_*$ on faster learning of $\theta_{s, *}$ is easier to measure.

The regret of all compared algorithms is reported in \cref{fig:synthetic}. In plots (a) and (b), we show how the regret scales with the number of rounds for small ($\sigma_q = 0.5$) and large ($\sigma_q$ = 1) hyper-prior width. As suggested in \cref{sec:sequential regret}, \hierts outperforms \ts that does not try to learn $\mu_*$. It is comparable to \oraclets when $\sigma_q$ is small, but degrades as $\sigma_q$ increases. This matches the regret bound in \cref{thm:concurrent regret}, where $c_2$ grows with $\sigma_q$. In plot (c), we show how the regret of \hierts varies with the number of concurrent tasks $L$. We observe that it increases with $L$, but the increase is sublinear, as suggested in \cref{sec:concurrent regret}.

\section{RELATED WORK}
\label{sec:related work}

The most related works are recent papers on bandit meta-learning \citep{bastani19meta,ortega19metalearning,cella20metalearning,kveton21metathompson,basu21noregrets,peleg21metalearning,simchowitz21bayesian}, where a learning agent interacts with a single task at a time until completion. Both \citet{kveton21metathompson} and \citet{basu21noregrets} represent their problems using graphical models and apply Thompson sampling to solve them. The setting of these papers is less general than ours. \citet{wan21metadatabased} study a setting where the tasks can arrive in any order. We differ from this work in several aspects. First, they only consider a $K$-armed bandit. Second, their model is different. In our notation, \citet{wan21metadatabased} assume that the mean reward of action $a$ in task $s$ is $x_{s, a}\T \mu_*$ plus i.i.d.\ noise, where $x_{s, a}$ is an observed feature vector. The i.i.d.\ noise prevents generalization to a large number of actions. In our work, the mean reward of action $a$ in task $s$ is $a\T \theta_{s, *}$, where $\theta_{s, *} \sim \cN(\mu_*, \Sigma_0)$. Third, \citet{wan21metadatabased} derive a frequentist regret bound, which matches \cref{thm:sequential regret} asymptotically, but does not explicitly depend on prior widths. Finally, \citet{wan21metadatabased} do not consider the concurrent setting. To the best of our knowledge, we are the first to study Bayesian bandits with arbitrarily ordered and concurrent tasks.

The novelty in our analysis is the total covariance decomposition, which leads to better variance attribution in structured models than information-theoretic bounds \citep{russo16information,lu19informationtheoretic,basu21noregrets}. For instance, take Theorem 5 of \citet{basu21noregrets}, which corresponds to our sequential meta-learning setting. Forced exploration is needed to make their task term $O(\lambda_1(\Sigma_0))$. This is because the upper bound on the regret with filtered mutual information depends on the maximum marginal task parameter covariance, which can be $\lambda_1(\Sigma_q + \Sigma_0)$. In our analysis, the comparable term $c_1$ (\cref{thm:sequential regret}) is $O(\lambda_1(\Sigma_0))$ without any forced exploration. We also improve upon related analysis of \citet{kveton21metathompson} in several aspects. First, \citet{kveton21metathompson} analyze only a $K$-armed bandit. Second, they derive that the additional regret for meta-learning is $\tilde{O}(\sqrt{m} n^2)$; while our bound shows $\tilde{O}(\sqrt{m n})$. Finally, our setting generalizes bandit meta-learning.

Meta- and multi-task bandits have also been studied in the frequentist setting \citep{azar13sequential,deshmukh17multitask}. \citet{cella20metalearning} propose a \linucb algorithm \citep{abbasi-yadkori11improved} that constructs an ellipsoid around the unknown hyper-parameter in a linear bandit. The concurrent setting has also been studied, but with a different shared structure of task parameters. \citet{dubey20kernel} use a kernel matrix, \citet{wang21multitask} utilize pairwise distances of task parameters, and \citet{yang21impact} use low-rank factorization. Our structure, where the task parameters are drawn from an unknown prior, is both novel and important to study because it differs significantly from the aforementioned works. Earlier works on bandits with similar instances rely on clustering \citep{gentile14online,gentile17clustering,li16collaborative} and low-rank factorization \citep{kawale15efficient,sen17contextual,katariya16dcm,katariya17stochastic}. They analyze the frequentist regret, which is a stronger metric than the Bayes regret. Except for one work, all algorithms are UCB-like and conservative in practice. In comparison, \hierts uses a natural stochastic structure. This makes it practical, to the point that the analyzed algorithm performs well in practice without any additional tuning.

Another related line of work are latent bandits \citep{maillard14latent,hong20latent,hong22thompson}, where the bandit problem is parameterized by an unknown latent state. If known, the latent state could help the agent to identify the bandit instance that it interacts with. These works reason about latent variables; but the purpose is different from our work, where we introduce the unknown hyper-parameter $\mu_*$ to relate multiple similar tasks.

\section{CONCLUSIONS}
\label{sec:conclusions}

We study \emph{hierarchical Bayesian  bandits}, a general setting for solving similar bandit tasks. Instances of our setting recover meta-, multi-task, and federated bandits in prior works. We propose a natural hierarchical Thompson sampling algorithm, which can be implemented exactly and analyzed in Gaussian models. We analyze it using a novel total variance decomposition, which leads to interpretable regret bounds that scale with the hyper-prior and task prior widths. The benefit of hierarchical models is shown in both synthetic and real-world domains.

While we view our work as solving an extremely general problem, there are multiple directions for future work. For instance, we only study a specific hierarchical Gaussian structure in \cref{sec:models}. However, based on the discussion in \cref{sec:extensions}, we believe that our tools would apply to arbitrary graphical models with general sub-Gaussian distributions. Another direction for future work are frequentist upper bounds and matching lower bounds, in both the frequentist and Bayesian settings.

\bibliographystyle{abbrvnat}
\bibliography{references}

\clearpage
\onecolumn
\appendix

\section{Proof of \cref{lem:bayes regret}}
\label{sec:bayes regret proof}

The first claim is proved as follows. Fix round $t$ and task $s \in \cS_t$. Since $\hat{\mu}_{s, t}$ is a deterministic function of $H_t$, and $A_{s, *}$ and $A_{s, t}$ are i.i.d.\ given $H_t$, we have
\begin{align*}
  \E{}{A_{s, *}\T \theta_{s, *} - A_{s, t}\T \theta_{s, *}}
  = \E{}{\condE{A_{s, *}\T (\theta_{s, *} - \hat{\mu}_{s, t})}{H_t}} +
  \E{}{\condE{A_{s, t}\T (\hat{\mu}_{s, t} - \theta_{s, *})}{H_t}}\,.
\end{align*}
Moreover, $\theta_{s, *} - \hat{\mu}_{s, t}$ is a zero-mean random vector independent of $A_{s, t}$, and thus $\condE{A_{s, t}\T (\hat{\mu}_{s, t} - \theta_{s, *})}{H_t} = 0$. So we only need to bound the first term above. Let
\begin{align*}
  E_{s, t} =
  \set{\normw{\theta_{s, *} - \hat{\mu}_{s, t}}{\hat{\Sigma}_{s, t}^{-1}}
  \leq \sqrt{2 d \log(1 / \delta)}}
\end{align*}
be the event that a high-probability confidence interval for the task parameter $\theta_{s, *}$ holds. Fix history $H_t$. Then by the Cauchy-Schwarz inequality,
\begin{align*}
  \condE{A_{s, *}\T (\theta_{s, *} - \hat{\mu}_{s, t})}{H_t}
  & \leq \condE{\normw{A_{s, *}}{\hat{\Sigma}_{s, t}}
  \normw{\theta_{s, *} - \hat{\mu}_{s, t}}{\hat{\Sigma}_{s, t}^{-1}}}{H_t} \\
  & \leq \sqrt{2 d \log(1 / \delta)} \, \condE{\normw{A_{s, *}}{\hat{\Sigma}_{s, t}}}{H_t} +
  \underbrace{\max_{a \in \cA} \normw{a}{\hat{\Sigma}_{s, t}}}_{\leq \sigma_{\max}}
  \condE{\normw{\theta_{s, *} - \hat{\mu}_{s, t}}{\hat{\Sigma}_{s, t}^{-1}}
  \I{\bar{E}_{s, t}}}{H_t} \\
  & = \sqrt{2 d \log(1 / \delta)} \, \condE{\normw{A_{s, t}}{\hat{\Sigma}_{s, t}}}{H_t} +
  \sigma_{\max} \, \condE{\normw{\theta_{s, *} - \hat{\mu}_{s, t}}{\hat{\Sigma}_{s, t}^{-1}}
  \I{\bar{E}_{s, t}}}{H_t}\,.
\end{align*}
The equality follows from the fact that $\hat{\Sigma}_{s, t}$ is a deterministic function of $H_t$, and that $A_{s, *}$ and $A_{s, t}$ are i.i.d.\ given $H_t$. Now we focus on the second term above. First, note that
\begin{align*}
  \normw{\theta_{s, *} - \hat{\mu}_{s, t}}{\hat{\Sigma}_{s, t}^{-1}}
  = \normw{\hat{\Sigma}^{- \frac{1}{2}}_{s, t} (\theta_{s, *} - \hat{\mu}_{s, t})}{2}
  \leq \sqrt{d} \maxnorm{\hat{\Sigma}^{- \frac{1}{2}}_{s, t} (\theta_{s, *} - \hat{\mu}_{s, t})}\,.
\end{align*}
By definition, $\theta_{s, *} - \hat{\mu}_{s, t} \mid H_t \sim \cN(\mathbf{0}, \hat{\Sigma}_{s, t})$, and hence $\hat{\Sigma}^{- \frac{1}{2}}_{s, t} (\theta_{s, *} - \hat{\mu}_{s, t}) \mid H_t$ is a $d$-dimensional standard normal variable. Moreover, note that $\bar{E}_{s, t}$ implies $\maxnorm{\hat{\Sigma}^{- \frac{1}{2}}_{s, t} (\theta_{s, *} - \hat{\mu}_{s, t})} \geq \sqrt{2 \log(1 / \delta)}$. Finally, we combine these facts with a union bound over all entries of $\hat{\Sigma}^{- \frac{1}{2}}_{s, t} (\theta_{s, *} - \hat{\mu}_{s, t}) \mid H_t$, which are standard normal variables, and get
\begin{align*}
  \condE{\maxnorm{\hat{\Sigma}^{- \frac{1}{2}}_{s, t} (\theta_{s, *} - \hat{\mu}_{s, t})}
  \I{\bar{E}_{s, t}}}{H_t}
  \leq 2 \sum_{i = 1}^d \frac{1}{\sqrt{2 \pi}} \int_{u = \sqrt{2 \log(1 / \delta)}}^\infty
  u \exp\left[- \frac{u^2}{2}\right] \dif u
  \leq \sqrt{\frac{2}{\pi}} d \delta\,.
\end{align*}
Now we combine all inequalities and have
\begin{align*}
  \condE{A_{s, *}\T (\theta_{s, *} - \hat{\mu}_{s, t})}{H_t}
  \leq \sqrt{2 d \log(1 / \delta)} \, \condE{\normw{A_{s, t}}{\hat{\Sigma}_{s, t}}}{H_t} +
  \sqrt{\frac{2}{\pi}} \sigma_{\max} d^\frac{3}{2} \delta\,.
\end{align*}
Since the above bound holds for any history $H_t$, we combine everything and get
\begin{align*}
  \E{}{\sum_{t \geq 1} \sum_{s \in \cS_t} A_{s, *}\T \theta_{s, *} - A_{s, t}\T \theta_{s, *}}
  & \leq \sqrt{2 d \log(1 / \delta)} \,
  \E{}{\sum_{t \geq 1} \sum_{s \in \cS_t} \normw{A_{s, t}}{\hat{\Sigma}_{s, t}}} +
  \sqrt{\frac{2}{\pi}} \sigma_{\max} d^\frac{3}{2} m n \delta \\
  & \leq \sqrt{2 d m n \log(1 / \delta)}
  \sqrt{\E{}{\sum_{t \geq 1} \sum_{s \in \cS_t} \normw{A_{s, t}}{\hat{\Sigma}_{s, t}}^2}} +
  \sqrt{\frac{2}{\pi}} \sigma_{\max} d^\frac{3}{2} m n \delta\,.
\end{align*}
The last step uses the Cauchy-Schwarz inequality and the concavity of the square root.

To bound $\sigma_{\max}$, we use Weyl's inequalities together with \eqref{eq:covariance decomposition}, the second claim in \cref{lem:covariance decomposition}, and \eqref{eq:linear hyperposterior}. Specifically, under the assumption that $\normw{a}{2} \leq 1$ for all $a \in \cA$, we have
\begin{align*}
  \max_{a \in \cA} \normw{a}{\hat{\Sigma}_{s, t}}^2
  & \leq \lambda_1(\hat{\Sigma}_{s, t})
  \leq \lambda_1((\Sigma_0^{-1} + G_{s, t})^{-1}) +
  \lambda_1((\Sigma_0^{-1} + G_{s, t})^{-1} \Sigma_0^{-1} \bar{\Sigma}_t
  \Sigma_0^{-1} (\Sigma_0^{-1} + G_{s, t})^{-1}) \\
  & \leq \lambda_1(\Sigma_0) +
  \frac{\lambda_1^2(\Sigma_0) \lambda_1(\Sigma_q)}{\lambda_d^2(\Sigma_0)}
  = \sigma_{\max}^2\,.
\end{align*}
This concludes the proof of the first claim.

The second claim is proved by modifying the first proof as follows. Fix round $t$ and task $s \in \cS_t$. Let
\begin{align*}
  E_{s, t} =
  \set{\forall a \in \cA: |a\T (\theta_{s, *} - \hat{\mu}_{s, t})|
  \leq \sqrt{2 \log(1 / \delta)} \normw{a}{\hat{\Sigma}_{s, t}}}
\end{align*}
be the event that all high-probability confidence intervals hold. Then we have
\begin{align*}
  \condE{A_{s, *}\T (\theta_{s, *} - \hat{\mu}_{s, t})}{H_t}
  \leq \sqrt{2 \log(1 / \delta)} \, \condE{\normw{A_{s, t}}{\hat{\Sigma}_{s, t}}}{H_t} +
  \condE{A_{s, *}\T (\theta_{s, *} - \hat{\mu}_{s, t}) \I{\bar{E}_{s, t}}}{H_t}\,.
\end{align*}
Now note that for any action $a$, $a\T (\theta_{s, *} - \hat{\mu}_{s, t}) / \normw{a}{\hat{\Sigma}_{s, t}}$ is a standard normal variable. It follows that
\begin{align*}
  \condE{A_{s, *}\T (\theta_{s, *} - \hat{\mu}_{s, t}) \I{\bar{E}_{s, t}}}{H_t}
  \leq 2 \sum_{a \in \cA} \normw{a}{\hat{\Sigma}_{s, t}} \frac{1}{\sqrt{2 \pi}}
  \int_{u = \sqrt{2 \log(1 / \delta)}}^\infty
  u \exp\left[- \frac{u^2}{2}\right] \dif u
  \leq \sqrt{\frac{2}{\pi}} \sigma_{\max} K \delta\,.
\end{align*}
The rest of the proof proceeds as in the first claim, yielding
\begin{align*}
  \condE{A_{s, *}\T (\theta_{s, *} - \hat{\mu}_{s, t})}{H_t}
  \leq \sqrt{2 \log(1 / \delta)} \, \condE{\normw{A_{s, t}}{\hat{\Sigma}_{s, t}}}{H_t} +
  \sqrt{\frac{2}{\pi}} \sigma_{\max} K \delta\,.
\end{align*}
This completes the proof.

\section{Proof of \cref{thm:sequential regret}}
\label{sec:sequential proof}

\cref{lem:bayes regret} says that the Bayes regret $\Bregret(m, n)$ can be bounded by bounding the sum of posterior variances $\cV(m, n)$. Since $|\cS_t| = 1$, we make two simplifications. First, we replace the set of tasks $\cS_t$ by a single task $S_t \in [m]$. Second, there are exactly $m n$ rounds.

Fix round $t$ and task $s = S_t$. To reduce clutter, let $M = \Sigma_0^{-1} + G_{s, t}$. By the total covariance decomposition in \cref{lem:covariance decomposition}, we have that
\begin{align}
  \normw{A_{s, t}}{\hat{\Sigma}_{s, t}}^2
  & = \sigma^2 \frac{A_{s, t}\T \hat{\Sigma}_{s, t} A_{s, t}}{\sigma^2}
  = \sigma^2 \left(\sigma^{-2} A_{s, t}\T \tilde{\Sigma}_{s, t} A_{s, t} +
  \sigma^{-2} A_{s, t}\T M^{-1} \Sigma_0^{-1} \bar{\Sigma}_t
  \Sigma_0^{-1} M^{-1} A_{s, t}\right)
  \nonumber \\
  & \leq c_1 \log(1 + \sigma^{-2} A_{s, t}\T \tilde{\Sigma}_{s, t} A_{s, t}) +
  c_2 \log(1 + \sigma^{-2} A_{s, t}\T M^{-1} \Sigma_0^{-1} \bar{\Sigma}_t
  \Sigma_0^{-1} M^{-1} A_{s, t})
  \nonumber \\
  & = c_1 \log\det(I_d + \sigma^{-2}
  \tilde{\Sigma}_{s, t}^\frac{1}{2} A_{s, t} A_{s, t}\T \tilde{\Sigma}_{s, t}^\frac{1}{2}) +
  c_2 \log\det(I_d + \sigma^{-2}
  \bar{\Sigma}^\frac{1}{2}_t \Sigma_0^{-1} M^{-1} A_{s, t} A_{s, t}\T
  M^{-1} \Sigma_0^{-1} \bar{\Sigma}^\frac{1}{2}_t)\,.
  \label{eq:sequential proof decomposition}
\end{align}
The logarithmic terms are introduced using
\begin{align*}
  x
  = \frac{x}{\log(1 + x)} \log(1 + x)
  \leq \left(\max_{x \in [0, u]} \frac{x}{\log(1 + x)}\right) \log(1 + x)
  = \frac{u}{\log(1 + u)} \log(1 + x)\,,
\end{align*}
which holds for any $x \in [0, u]$. The resulting constants are
\begin{align*}
  c_1
  = \frac{\lambda_1(\Sigma_0)}{\log(1 + \sigma^{-2} \lambda_1(\Sigma_0))}\,, \quad
  c_2
  = \frac{c_q}{\log(1 + \sigma^{-2} c_q)}\,, \quad
  c_q
  = \frac{\lambda_1^2(\Sigma_0) \lambda_1(\Sigma_q)}{\lambda_d^2(\Sigma_0)}\,.
\end{align*}
The derivation of $c_1$ uses that
\begin{align*}
  A_{s, t}\T \tilde{\Sigma}_{s, t} A_{s, t}
  = \lambda_1(\tilde{\Sigma}_{s, t})
  = \lambda_d^{-1}(\Sigma_0^{-1} + G_{s, t})
  \leq \lambda_d^{-1}(\Sigma_0^{-1})
  = \lambda_1(\Sigma_0)\,.
\end{align*}
The derivation of $c_2$ follows from
\begin{align*}
  A_{s, t}\T M^{-1} \Sigma_0^{-1} \bar{\Sigma}_t \Sigma_0^{-1} M^{-1} A_{s, t}
  \leq \lambda_1^2(M^{-1}) \lambda_1^2(\Sigma_0^{-1}) \lambda_1(\bar{\Sigma}_t)
  \leq \frac{\lambda_1^2(\Sigma_0) \lambda_1(\Sigma_q)}{\lambda_d^2(\Sigma_0)}\,.
\end{align*}
This is also proved as the second claim in \cref{lem:covariance decomposition}. Now we focus on bounding the logarithmic terms in \eqref{eq:sequential proof decomposition}.

\subsection{First Term in \eqref{eq:sequential proof decomposition}}
\label{sec:sequential proof 1}

This is a per-instance term and can be rewritten as
\begin{align*}
  \log\det(I_d + \sigma^{-2}
  \tilde{\Sigma}_{s, t}^\frac{1}{2} A_{s, t} A_{s, t}\T \tilde{\Sigma}_{s, t}^\frac{1}{2})
  = \log\det(\tilde{\Sigma}_{s, t}^{-1} + \sigma^{-2} A_{s, t} A_{s, t}\T) - \log\det(\tilde{\Sigma}_{s, t}^{-1})\,.
\end{align*}
When we sum over all rounds with task $s$, we get telescoping and the contribution of this term is at most
\begin{align*}
  \sum_{t = 1}^{m n} \I{S_t = s} 
  \log\det(I_d + \sigma^{-2} \tilde{\Sigma}_{s, t}^\frac{1}{2} A_{s, t}
  A_{s, t}\T \tilde{\Sigma}_{s, t}^\frac{1}{2})
  & = \log\det(\tilde{\Sigma}_{s, m n + 1}^{-1}) - \log\det(\tilde{\Sigma}_{s, 1}^{-1})
  = \log\det(\Sigma_0^\frac{1}{2} \tilde{\Sigma}_{s, m n + 1}^{-1} \Sigma_0^\frac{1}{2}) \\
  & \leq d \log\left(\frac{1}{d} \trace(\Sigma_0^\frac{1}{2} \tilde{\Sigma}_{s, m n + 1}^{-1}
  \Sigma_0^\frac{1}{2})\right)
  \leq d \log\left(1 + \frac{\lambda_1(\Sigma_0) n}{\sigma^2 d}\right)\,,
\end{align*}
where we use that task $s$ appears at most $n$ times. Now we sum over all $m$ tasks and get
\begin{align*}
  \sum_{t = 1}^{m n}
  \log\det(I_d + \sigma^{-2} \tilde{\Sigma}_{S_t, t}^\frac{1}{2} A_{S_t, t}
  A_{S_t, t}\T \tilde{\Sigma}_{S_t, t}^\frac{1}{2})
  \leq d m \log\left(1 + \frac{\lambda_1(\Sigma_0) n}{\sigma^2 d}\right)\,.
\end{align*}

\subsection{Second Term in \eqref{eq:sequential proof decomposition}}
\label{sec:sequential proof 2}

This is a hyper-parameter term. Before we analyze it, let $v = \sigma^{-1} M^{- \frac{1}{2}} A_{s, t}$ and note that
\begin{align}
  \bar{\Sigma}_{t + 1}^{-1} - \bar{\Sigma}_t^{-1}
  & = (\Sigma_0 + (G_{s, t} + \sigma^{-2} A_{s, t} A_{s, t}\T)^{-1})^{-1} -
  (\Sigma_0 + G_{s, t}^{-1})^{-1}
  \nonumber \\
  & = \Sigma_0^{-1} - \Sigma_0^{-1} (M + \sigma^{-2} A_{s, t} A_{s, t}\T)^{-1} \Sigma_0^{-1} -
  (\Sigma_0^{-1} - \Sigma_0^{-1} M^{-1} \Sigma_0^{-1})
  \nonumber \\
  & = \Sigma_0^{-1} (M^{-1} - (M + \sigma^{-2} A_{s, t} A_{s, t}\T)^{-1}) \Sigma_0^{-1}
  \nonumber \\
  & = \Sigma_0^{-1} M^{- \frac{1}{2}}
  (I_d - (I_d + \sigma^{-2} M^{- \frac{1}{2}} A_{s, t} A_{s, t}\T M^{- \frac{1}{2}})^{-1})
  M^{- \frac{1}{2}} \Sigma_0^{-1}
  \nonumber \\
  & = \Sigma_0^{-1} M^{- \frac{1}{2}}
  (I_d - (I_d + v v\T)^{-1})
  M^{- \frac{1}{2}} \Sigma_0^{-1}
  \nonumber \\
  & = \Sigma_0^{-1} M^{- \frac{1}{2}}
  \frac{v v\T}{1 + v\T v}
  M^{- \frac{1}{2}} \Sigma_0^{-1}
  \nonumber \\
  & = \sigma^{-2} \Sigma_0^{-1} M^{-1}
  \frac{A_{s, t} A_{s, t}\T}{1 + v\T v}
  M^{-1} \Sigma_0^{-1}\,,
  \label{eq:linear telescoping}
\end{align}
where we first use the Woodbury matrix identity and then the Sherman-Morrison formula. Since $\normw{A_{s, t}}{2} \leq 1$,
\begin{align*}
  1 + v\T v
  = 1 + \sigma^{-2} A_{s, t}\T M^{-1} A_{s, t}
  \leq 1 + \sigma^{-2} \lambda_1(\Sigma_0) = c\,.
\end{align*}
Based on the above derivations, we bound the second logarithmic term in \eqref{eq:sequential proof decomposition} as
\begin{align*}
  & \log\det(I_d +
  \sigma^{-2} \bar{\Sigma}_t^\frac{1}{2} \Sigma_0^{-1} M^{-1} A_{s, t} A_{s, t}\T
  M^{-1} \Sigma_0^{-1} \bar{\Sigma}_t^\frac{1}{2}) \\
  & \quad \leq c \log\det(I_d +
  \sigma^{-2} \bar{\Sigma}^\frac{1}{2}_t \Sigma_0^{-1} M^{-1} A_{s, t} A_{s, t}\T
  M^{-1} \Sigma_0^{-1} \bar{\Sigma}^\frac{1}{2}_t / c) \\
  & \quad = c \left[\log\det(\bar{\Sigma}_t^{-1} +
  \sigma^{-2} \Sigma_0^{-1} M^{-1} A_{s, t} A_{s, t}\T M^{-1} \Sigma_0^{-1} / c) -
  \log\det(\bar{\Sigma}_t^{-1})\right] \\
  & \quad \leq c \left[\log\det(\bar{\Sigma}_{t + 1}^{-1}) -
  \log\det(\bar{\Sigma}_t^{-1})\right]\,.
\end{align*}
The first inequality holds because $\log(1 + x) \leq c \log(1 + x / c)$ for any $x \geq 0$ and $c \geq 1$. The second inequality follows from the fact that we have a rank-$1$ update of $\bar{\Sigma}_t^{-1}$. Now we sum over all rounds and get telescoping
\begin{align*}
  & \sum_{t = 1}^{m n}
  \log\det(I_d + \sigma^{-2} \bar{\Sigma}_t^\frac{1}{2} \Sigma_0^{-1}
  (\Sigma_0^{-1} + G_{S_t, t})^{-1} A_{S_t, t} A_{S_t, t}\T (\Sigma_0^{-1} + G_{S_t, t})^{-1}
  \Sigma_0^{-1} \bar{\Sigma}_t^\frac{1}{2}) \\
  & \quad \leq c \left[\log\det(\bar{\Sigma}_{m n + 1}^{-1}) -
  \log\det(\bar{\Sigma}_1^{-1})\right]
  = c \log\det(\Sigma_q^\frac{1}{2} \bar{\Sigma}_{m n + 1}^{-1} \Sigma_q^\frac{1}{2})
  \leq c d \log\left(\frac{1}{d} \trace(\Sigma_q^\frac{1}{2} \bar{\Sigma}_{m n + 1}^{-1}
  \Sigma_q^\frac{1}{2})\right) \\
  & \quad \leq c d
  \log(\lambda_1(\Sigma_q^\frac{1}{2} \bar{\Sigma}_{m n + 1}^{-1} \Sigma_q^\frac{1}{2}))
  \leq c d \log\left(1 + \frac{\lambda_1(\Sigma_q) m}{\lambda_d(\Sigma_0)}\right)\,.
\end{align*}
Finally, we combine the upper bounds for both logarithmic terms and get
\begin{align*}
  \cV(m, n)
  = \E{}{\sum_{t = 1}^{m n} \normw{A_{S_t, t}}{\hat{\Sigma}_{S_t, t}}^2}
  \leq d \left[c_1 m \log\left(1 + \frac{\lambda_1(\Sigma_0) n}{\sigma^2 d}\right) +
  c_2 c \log\left(1 + \frac{\lambda_1(\Sigma_q) m}{\lambda_d(\Sigma_0)}\right)\right]\,,
\end{align*}
which yields the desired result after we substitute this bound into \cref{lem:bayes regret}. To simplify presentation in the main paper, $c_1$ and $c_2$ in \cref{thm:sequential regret} include the above logarithmic terms that multiply them.

\section{Proof of \cref{thm:concurrent regret}}
\label{sec:concurrent proof}

From \cref{ass:basis}, there exists a basis of $d$ actions such that if all actions in the basis are taken in task $s$ by round $t$, it is guaranteed that $\lambda_d(G_{s, t}) \geq \eta / \sigma^2$. We modify \hierts to takes these actions first in any task $s$. Let $\cC_t = \{s \in \cS_t: \lambda_d(G_{s, t}) \geq \eta / \sigma^2\}$ be the set of \emph{sufficiently-explored tasks} by round $t$.

Using $\cC_t$, we decompose the Bayes regret as
\begin{align*}
  \Bregret(m, n)
  \leq 
  \E{}{\sum_{t \geq 1} \sum_{s \in \cS_t} \I{s \in \cC_t}
  (A_{s, *}\T \theta_{s, *} - A_{s, t}\T \theta_{s, *})} + 
  \E{}{\sum_{t \geq 1} \sum_{s \in \cS_t} \I{s \not\in \cC_t}
  (A_{s, *}\T \theta_{s, *} - A_{s, t}\T \theta_{s, *})}\,.
\end{align*}
For any task $s$ and round $t$, we can trivially bound
\begin{align*}
  \E{}{(A_{s, *} - A_{s, t})\T \theta_{s, *}}
  \leq \E{}{\normw{A_{s, *} - A_{s, t}}{\hat{\Sigma}_{s, 1}}
  \normw{\theta_{s, *}}{\hat{\Sigma}_{s, 1}^{-1}}}
  \leq 2 \sigma_{\max} \left(\normw{\mu_q}{\hat{\Sigma}_{s, 1}^{-1}} +
  \E{}{\normw{\theta_{s, *} - \mu_q}{\hat{\Sigma}_{s, 1}^{-1}}}\right)\,,
\end{align*}
where $\sigma_{\max} = \sqrt{\lambda_1(\Sigma_q + \Sigma_0)}$ as in \cref{sec:sequential proof}. Here we use that $\normw{A_{s, *} - A_{s, t}}{2} \leq 2$ and that the prior covariance of $\theta_{s, *}$ is $\hat{\Sigma}_{s, 1} = \Sigma_q + \Sigma_0$. We know from \eqref{eq:gaussian hierarchical} that $\theta_{s, *} - \mu_q \sim \cN(\mathbf{0}, \Sigma_q + \Sigma_0)$. This means that $\hat{\Sigma}^{- \frac{1}{2}}_{s, 1} (\theta_{s, *} - \mu_q)$ is a vector of $d$ independent standard normal variables. It follows that
\begin{align*}
  \E{}{\normw{\theta_{s, *} - \mu_q}{\hat{\Sigma}_{s, 1}^{-1}}}
  = \E{}{\normw{\hat{\Sigma}^{- \frac{1}{2}}_{s, 1} (\theta_{s, *} - \mu_q)}{2}}
  \leq \sqrt{\E{}{\normw{\hat{\Sigma}^{- \frac{1}{2}}_{s, 1}
  (\theta_{s, *} - \mu_q)}{2}^2}}
  = \sqrt{d}\,.
\end{align*}
Since $s \not\in \cC_t$ occurs at most $d$ times for any task $s$, the total regret due to forced exploration is bounded as
\begin{align*}
  \E{}{\sum_{t \geq 1} \sum_{s \in \cS_t} \I{s \not\in \cC_t}
  (A_{s, *}\T \theta_{s, *} - A_{s, t}\T \theta_{s, *})}
  \leq 2 \sigma_{\max} \left(\normw{\mu_q}{\hat{\Sigma}_{s, 1}^{-1}} + \sqrt{d}\right) d m
  = c_3\,.
\end{align*}
It remains to bound the first term in $\Bregret(m, n)$. On event $s \in \cC_t$, \hierts samples from the posterior and behaves exactly as \cref{alg:ts}. Therefore, we only need to bound $\cV(m, n) = \E{}{\sum_{t \geq 1} \sum_{s \in \cS_t} \I{s \in \cC_t} \normw{A_{s, t}}{\hat{\Sigma}_{s, t}}^2}$ and then substitute the bound into \cref{lem:bayes regret}. By the total covariance decomposition in \cref{lem:covariance decomposition}, we have
\begin{align}
  \normw{A_{s, t}}{\hat{\Sigma}_{s, t}}^2
  = A_{s, t}\T \tilde{\Sigma}_{s, t} A_{s, t} +
  A_{s, t}\T M^{-1} \Sigma_0^{-1} \bar{\Sigma}_t \Sigma_0^{-1} M^{-1} A_{s, t}\,,
  \label{eq:concurrent proof decomposition}
\end{align}
where $M = \Sigma_0^{-1} + G_{s, t}$ to reduce clutter. As in \cref{sec:sequential proof}, we bound the contribution of each term separately.

\subsection{First Term in \eqref{eq:concurrent proof decomposition}}

This term depends only on $\tilde{\Sigma}_{s, t}$, which does not depend on interactions with other tasks than task $s$. Therefore, the bound is the same as in the sequential case in \cref{sec:sequential proof 1},
\begin{align*}
  \sum_{t \geq 1} \sum_{s \in \cS_t} \I{s \in \cC_t} A_{s, t}\T \tilde{\Sigma}_{s, t} A_{s, t}
  \leq c_1 d m \log\left(1 + \frac{\lambda_1(\Sigma_0) n}{\sigma^2 d}\right)\,,
\end{align*}
where $c_1$ is defined in \cref{sec:sequential proof}.

\subsection{Second Term in \eqref{eq:concurrent proof decomposition}}

The difference from the sequential setting is in how we bound the second term in \eqref{eq:concurrent proof decomposition}. Before we had $|\cS_t| = 1$, while now we have $|\cS_t| \leq L \leq m$ for some $L$. Since more than one task is acted upon per round, the telescoping identity in \eqref{eq:linear telescoping} no longer holds. To remedy this, we reduce the concurrent case to the sequential one. Specifically, suppose that task $s \in \cS_t$ in round $t$ has access to the concurrent observations of prior tasks in round $t$, for some order of tasks $\cS_t = \{S_{t, i}\}_{i = 1}^L$. As \cref{thm:sequential regret} holds for any order, we choose the order where sufficiently-explored tasks $s \in \cC_t$ appear first.

Let $\cS_{t, i} = \{S_{t, j}\}_{j = 1}^{i - 1}$ be the first $i - 1$ tasks in $\cS_t$ according to our chosen order. For $s = S_{t, i}$, let
\begin{align*}
  \bar{\Sigma}_{s, t}^{-1}
  = \Sigma_q^{-1} + \sum_{z \in \cS_{t, i}} (\Sigma_0 + G_{z, t + 1}^{-1})^{-1} +
  \sum_{z \in [m] \setminus \cS_{t, i}} (\Sigma_0 + G_{z, t}^{-1})^{-1}
\end{align*}
be the reciprocal of the hyper-posterior covariance updated with concurrent observations in tasks $\cS_{t, i}$. Next we show that $\bar{\Sigma}_t$ and $\bar{\Sigma}_{s, t}$ are similar.

\begin{lemma}
\label{lem:sequential concurrent ratio} Fix round $t$ and $i \in [L]$. Let $s = S_{t, i}$ and $\lambda_d(G_{s, t}) \geq \eta / \sigma^2$. Then
\begin{align*}
  \lambda_1(\bar{\Sigma}_{s, t}^{-1} \bar{\Sigma}_t)
  \leq 1 + \frac{\sigma^{-2} \lambda_1(\Sigma_q) (\lambda_1(\Sigma_0) + \sigma^2 / \eta)}
  {\lambda_1(\Sigma_q) + (\lambda_1(\Sigma_0) + \sigma^2 / \eta) / L}\,.
\end{align*}
\end{lemma}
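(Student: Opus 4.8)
The plan is to prove the stronger Loewner inequality $\bar{\Sigma}_{s, t}^{-1} \preceq c_4 \, \bar{\Sigma}_t^{-1}$, which immediately gives the eigenvalue bound. Indeed, since $\bar{\Sigma}_{s, t}^{-1}$ and $\bar{\Sigma}_t$ are positive definite, $\bar{\Sigma}_{s, t}^{-1} \bar{\Sigma}_t$ is similar to the symmetric matrix $\bar{\Sigma}_t^{1/2} \bar{\Sigma}_{s, t}^{-1} \bar{\Sigma}_t^{1/2}$, so $\lambda_1(\bar{\Sigma}_{s, t}^{-1} \bar{\Sigma}_t) = \lambda_1(\bar{\Sigma}_t^{1/2} \bar{\Sigma}_{s, t}^{-1} \bar{\Sigma}_t^{1/2})$, and the desired inequality is exactly the statement that this is at most $c_4$. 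Comparing the two precision matrices, I would write $\bar{\Sigma}_{s, t}^{-1} = \bar{\Sigma}_t^{-1} + \Delta$, where $\Delta = \sum_{z \in \cS_{t, i}} \big[(\Sigma_0 + G_{z, t + 1}^{-1})^{-1} - (\Sigma_0 + G_{z, t}^{-1})^{-1}\big]$ collects the extra information that the tasks preceding $s$ in the chosen order have already injected into the hyper-posterior by the end of the round. Because $G_{z, t + 1} \succeq G_{z, t}$ and $G \mapsto (\Sigma_0 + G^{-1})^{-1}$ is operator monotone, each summand is positive semidefinite, so $\Delta \succeq 0$; conjugating by $\bar{\Sigma}_t^{1/2}$ reduces the goal to proving $\Delta \preceq (c_4 - 1) \bar{\Sigma}_t^{-1}$.

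The two ingredients are an upper bound on $\Delta$ and a lower bound on $\bar{\Sigma}_t^{-1}$. For the upper bound, note that in the concurrent setting each task contributes a single rank-one update per round, $G_{z, t + 1} = G_{z, t} + \sigma^{-2} A_{z, t} A_{z, t}\T$, so the Sherman--Morrison computation already carried out in \eqref{eq:linear telescoping} applies verbatim to each summand. I would use it to write each increment explicitly as $\sigma^{-2} \Sigma_0^{-1} M_z^{-1} \frac{A_{z, t} A_{z, t}\T}{1 + v_z\T v_z} M_z^{-1} \Sigma_0^{-1}$ with $M_z = \Sigma_0^{-1} + G_{z, t}$, and then bound its single nonzero eigenvalue by $\sigma^{-2}$ using $\normw{A_{z, t}}{2} \leq 1$. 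Crucially, the ordering places sufficiently-explored tasks first, so every $z \in \cS_{t, i}$ satisfies $\lambda_d(G_{z, t}) \geq \eta / \sigma^2$, which controls $M_z^{-1}$ and the factor $\Sigma_0^{-1} M_z^{-1} = I_d - G_{z, t} M_z^{-1}$. Summing over the at most $L$ such tasks would then give $\Delta \preceq |\cS_{t, i}| \, \sigma^{-2} I_d$.

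For the lower bound I would discard all but the $\Sigma_q^{-1}$ term and the $|\cS_{t, i}|$ terms coming from the same sufficiently-explored tasks in \eqref{eq:linear hyperposterior}. Using $\Sigma_q^{-1} \succeq \lambda_1(\Sigma_q)^{-1} I_d$ and the fact that $\lambda_d(G_{z, t}) \geq \eta / \sigma^2$ forces $\Sigma_0 + G_{z, t}^{-1} \preceq (\lambda_1(\Sigma_0) + \sigma^2 / \eta) I_d =: \rho I_d$, hence $(\Sigma_0 + G_{z, t}^{-1})^{-1} \succeq \rho^{-1} I_d$, I obtain $\bar{\Sigma}_t^{-1} \succeq (\lambda_1(\Sigma_q)^{-1} + |\cS_{t, i}| \rho^{-1}) I_d$. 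Combining the two bounds, it suffices that $c_4 - 1 \geq \frac{n_0 \sigma^{-2}}{\lambda_1(\Sigma_q)^{-1} + n_0 \rho^{-1}}$ for $n_0 = |\cS_{t, i}|$. The right-hand side is increasing in $n_0$ and $n_0 = i - 1 \leq L$, so evaluating at $n_0 = L$ and clearing denominators reproduces exactly the stated $c_4$. This is also where the saturation in $L$ comes from: both $\Delta$ and the lower bound on $\bar{\Sigma}_t^{-1}$ grow with the number of concurrent tasks, so their ratio levels off.

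The main obstacle is the per-task increment bound $\lambda_1(D_z) \leq \sigma^{-2}$, where $D_z$ denotes a single summand of $\Delta$. It cannot be obtained from operator monotonicity alone: $G \mapsto (\Sigma_0 + G^{-1})^{-1}$ is \emph{not} a Loewner contraction, so $D_z$ is not dominated by $G_{z, t + 1} - G_{z, t}$ in the Loewner order, and the naive submultiplicative bound on $\normw{\Sigma_0^{-1} M_z^{-1}}{2}$ can exceed one when $\Sigma_0$ is ill-conditioned and $G_{z, t}$ has a small eigenvalue. \cref{ass:basis} and the resulting sufficient exploration are exactly what rescue the argument, since they pin $G_{z, t}$ away from zero and drive $\Sigma_0^{-1} M_z^{-1}$ into the contraction regime. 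Establishing this carefully, through the Sherman--Morrison form rather than loose eigenvalue products, is the delicate part; everything else mirrors the bookkeeping of the sequential analysis in \cref{sec:sequential proof}.
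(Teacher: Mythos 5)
Your proposal retraces the paper's own argument almost step for step: the same decomposition $\bar{\Sigma}_{s,t}^{-1} = \bar{\Sigma}_t^{-1} + \Delta$ with $\Delta$ a sum of Sherman--Morrison rank-one increments as in \eqref{eq:linear telescoping}, the same per-increment eigenvalue bound $\sigma^{-2}$, the same lower bound on $\bar{\Sigma}_t^{-1}$ coming from $\Sigma_q^{-1}$ plus the $i-1$ sufficiently-explored tasks, and the same maximization of the resulting ratio at $i-1 = L$; your Loewner packaging $\bar{\Sigma}_{s,t}^{-1} \preceq c_4 \bar{\Sigma}_t^{-1}$ is equivalent to the paper's estimate $\lambda_1(\bar{\Sigma}_{s,t}^{-1}\bar{\Sigma}_t) \leq 1 + \lambda_1(\Delta)/\lambda_d(\bar{\Sigma}_t^{-1})$. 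The gap is the step you yourself isolate as ``the delicate part'' and never carry out: the claim that each increment $D_z = \sigma^{-2}\Sigma_0^{-1}M_z^{-1}\frac{A_{z,t}A_{z,t}\T}{1+\sigma^{-2}A_{z,t}\T M_z^{-1}A_{z,t}}M_z^{-1}\Sigma_0^{-1}$, with $M_z = \Sigma_0^{-1}+G_{z,t}$, satisfies $\lambda_1(D_z) \leq \sigma^{-2}$. Deferring the central inequality leaves the proof incomplete, and, more importantly, the route you propose for closing it---sufficient exploration driving $\Sigma_0^{-1}M_z^{-1}$ into the contraction regime---cannot work.

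What must be shown is $\normw{\Sigma_0^{-1}M_z^{-1}A_{z,t}}{2}^2 \leq 1 + \sigma^{-2}A_{z,t}\T M_z^{-1}A_{z,t}$. Sufficient exploration only yields $\lambda_d(M_z) \geq \lambda_1^{-1}(\Sigma_0) + \eta/\sigma^2$, hence $\normw{\Sigma_0^{-1}M_z^{-1}}{2} \leq \lambda_1(\Sigma_0)/[\lambda_d(\Sigma_0)(1+\eta\lambda_1(\Sigma_0)/\sigma^2)]$, which exceeds $1$ whenever $\eta < \sigma^2(\lambda_d^{-1}(\Sigma_0)-\lambda_1^{-1}(\Sigma_0))$; and $\eta$ in \cref{ass:basis} is dictated by the action set (indeed $\eta \leq d$ always, since the basis actions have unit norm), so you cannot assume it is large. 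Moreover the failure is genuine, not slack in a submultiplicative estimate. Take $d = 2$, $\sigma = 1$, an action set containing two orthonormal vectors $v, w$ (so \cref{ass:basis} holds with $\eta = 1$), and $\Sigma_0$ with eigenvalues $1$ and $1/T$. Suppose task $z$ has taken the two forced actions and then $N \to \infty$ pulls of $v$, and the current action is $A_{z,t} = w$. Then $M_z^{-1} \to ww\T/q$ with $q = w\T(\Sigma_0^{-1} + I_2)w \leq 4$, so $\lambda_1(D_z) \to \normw{\Sigma_0^{-1}w}{2}^2/[q(q+1)]$; choosing $w$ with squared mass $1/T$ on the small-eigenvalue direction of $\Sigma_0$ gives $\normw{\Sigma_0^{-1}w}{2}^2 \approx T$ while $w\T\Sigma_0^{-1}w \leq 2$, so $\lambda_1(D_z)$ grows linearly in the condition number $T$ and exceeds $\sigma^{-2}$ once $T$ is moderately large, no matter how well explored the task is. To be fair, the paper's own proof asserts exactly this bound, citing only $\normw{A_{z,t}}{2} \leq 1$ and nonnegativity of the denominator; that assertion is airtight in the isotropic case $\Sigma_0 = \sigma_0^2 I_d$, where $\Sigma_0^{-1}M_z^{-1} = (I_d + \sigma_0^2 G_{z,t})^{-1}$ is symmetric with spectrum in $[0,1]$, but for anisotropic $\Sigma_0$ it has the same hole you noticed, and repairing it appears to cost an extra factor of order $\lambda_1^2(\Sigma_0)/\lambda_d^2(\Sigma_0)$ in the increment bound, which would propagate into $c_4$. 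So your instinct about where the difficulty sits is exactly right, but the proposal neither closes the gap nor can close it by appealing to $\eta$.
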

\begin{proof}
Using standard eigenvalue inequalities, we have
\begin{align}
  \lambda_1(\bar{\Sigma}_{s, t}^{-1} \bar{\Sigma}_t)
  = \lambda_1((\bar{\Sigma}_t^{-1} + \bar{\Sigma}_{s, t}^{-1} - \bar{\Sigma}_t^{-1})
  \bar{\Sigma}_t)
  \leq 1 + \lambda_1((\bar{\Sigma}_{s, t}^{-1} - \bar{\Sigma}_t^{-1}) \bar{\Sigma}_t)
  \leq 1 + \frac{\lambda_1(\bar{\Sigma}_{s, t}^{-1} - \bar{\Sigma}_t^{-1})}
  {\lambda_d(\bar{\Sigma}_t^{-1})}\,.
  \label{eq:ratio decomposition}
\end{align}
By Weyl's inequalities, and from the definition of $\bar{\Sigma}_t$, we have
\begin{align*}
  \lambda_d(\bar{\Sigma}_t^{-1}) 
  & \geq \lambda_d(\Sigma_q^{-1}) + \sum_{z \in [m]} \lambda_d((\Sigma_0 + G_{z, t}^{-1})^{-1})
  = \lambda_d(\Sigma_q^{-1}) + \sum_{z \in [m]} \lambda_1^{-1}(\Sigma_0 + G_{z, t}^{-1}) \\
  & \geq \lambda_d(\Sigma_q^{-1}) +
  \sum_{z \in [m]} (\lambda_1(\Sigma_0) + \lambda_1(G_{z, t}^{-1}))^{-1}
  \geq \lambda_d(\Sigma_q^{-1}) + (i - 1) (\lambda_1(\Sigma_0) + \sigma^2 / \eta)^{-1}\,.
\end{align*}
In the last inequality, we use that the previous $i - 1$ tasks $\cS_{t, i}$ are sufficiently explored. Analogously to \eqref{eq:linear telescoping},
\begin{align*}
  \bar{\Sigma}_{s, t}^{-1} - \bar{\Sigma}_t^{-1}
  & = \sum_{z \in \cS_{t, i}}
  (\Sigma_0 + (G_{z, t} + \sigma^{-2} A_{z, t} A_{z, t}\T)^{-1})^{-1} -
  (\Sigma_0 + G_{z, t}^{-1})^{-1} \\
  & = \sigma^{-2} \sum_{z \in \cS_{t, i}} \Sigma_0^{-1} M_{z, t}^{-1}
  \frac{A_{z, t} A_{z, t}\T}{1 + \sigma^{-2} A_{z, t}\T M_{z, t}^{-1} A_{z, t}}
  M_{z, t}^{-1} \Sigma_0^{-1}\,,
\end{align*} 
where $M_{z, t} = \Sigma_0^{-1} + G_{z, t}$ to reduce clutter. Moreover, since $\normw{A_{z, t}}{2} \leq 1$ and $\sigma^{-2} A_{z, t}\T M_{z, t}^{-1} A_{z, t} \geq 0$, we have
\begin{align*}
  \lambda_1(\bar{\Sigma}_{s, t}^{-1} - \bar{\Sigma}_t^{-1})
  \leq (i - 1) \sigma^{-2}\,.
\end{align*}
Finally, we substitute our upper bounds to the right-hand side of \eqref{eq:ratio decomposition} and get
\begin{align*}
  \frac{\lambda_1(\bar{\Sigma}_{s, t}^{-1} - \bar{\Sigma}_t^{-1})}
  {\lambda_d(\bar{\Sigma}_t^{-1})} \leq
  \frac{(i - 1) \sigma^{-2}}{\lambda_1^{-1}(\Sigma_q) +
  (i - 1) (\lambda_1(\Sigma_0) + \sigma^2 / \eta)^{-1}}
  \leq \frac{\sigma^{-2} \lambda_1(\Sigma_q) (\lambda_1(\Sigma_0) + \sigma^2 / \eta)}
  {\lambda_1(\Sigma_q) + (\lambda_1(\Sigma_0) + \sigma^2 / \eta) / L}\,,
\end{align*}
where we use that the ratio is maximized when $i - 1 = L$. This completes the proof.
\end{proof}

Now we return to \eqref{eq:concurrent proof decomposition}. First, we have that
\begin{align*}
  A_{s, t}\T M^{-1} \Sigma_0^{-1} \bar{\Sigma}_t \Sigma_0^{-1} M^{-1} A_{s, t}
  & = A_{s, t}\T M^{-1} \Sigma_0^{-1} \bar{\Sigma}_{s, t}^\frac{1}{2}
  \left(\bar{\Sigma}_{s, t}^{- \frac{1}{2}} \bar{\Sigma}_t^\frac{1}{2}
  \bar{\Sigma}_t^\frac{1}{2} \bar{\Sigma}_{s, t}^{- \frac{1}{2}}\right)
  \bar{\Sigma}_{s, t}^\frac{1}{2} \Sigma_0^{-1} M^{-1} A_{s, t} \\
  & \leq \lambda_1(\bar{\Sigma}_{s, t}^{- \frac{1}{2}} \bar{\Sigma}_t^\frac{1}{2}
  \bar{\Sigma}_t^\frac{1}{2} \bar{\Sigma}_{s, t}^{- \frac{1}{2}})
  A_{s, t}\T M^{-1} \Sigma_0^{-1} \bar{\Sigma}_{s, t} \Sigma_0^{-1} M^{-1} A_{s, t} \\
  & \leq \lambda_1(\bar{\Sigma}_{s, t}^{-1} \bar{\Sigma}_t)
  A_{s, t}\T M^{-1} \Sigma_0^{-1} \bar{\Sigma}_{s, t} \Sigma_0^{-1} M^{-1} A_{s, t}\,,
\end{align*}
where we use that the above expression is a quadratic form. Next we apply \cref{lem:sequential concurrent ratio} and get
\begin{align*}
  \lambda_1(\bar{\Sigma}_{s, t}^{-1} \bar{\Sigma}_t)
  \leq 1 + \frac{\sigma^{-2} \lambda_1(\Sigma_q) (\lambda_1(\Sigma_0) + \sigma^2 / \eta)}
  {\lambda_1(\Sigma_q) + (\lambda_1(\Sigma_0) + \sigma^2 / \eta) / L}
  = c_4\,.
\end{align*}
After $\bar{\Sigma}_t$ is turned into $\bar{\Sigma}_{s, t}$, we follow \cref{sec:sequential proof 2} and get that the hyper-parameter regret is
\begin{align*}
  c_2 c_4 c d \log\left(1 + \frac{\lambda_1(\Sigma_q) m}{\lambda_d(\Sigma_0)}\right)\,,
\end{align*}
where the only difference is the extra factor of $c_4$. Finally, we combine all upper bounds and get
\begin{align*}
  \cV(m, n)
  = \E{}{\sum_{t \geq 1} \sum_{s \in \cS_t} \I{s \in \cC_t}
  \normw{A_{s, t}}{\hat{\Sigma}_{s, t}}^2}
  \leq d \left[c_1 m \log\left(1 + \frac{\lambda_1(\Sigma_0) n}{\sigma^2 d}\right) +
  c_2 c_4 c \log\left(1 + \frac{\lambda_1(\Sigma_q) m}{\lambda_d(\Sigma_0)}\right)\right]\,,
\end{align*}
which yields the desired result after we substitute it into \cref{lem:bayes regret}. To simplify presentation in the main paper, $c_1$ and $c_2$ in \cref{thm:concurrent regret} include the above logarithmic terms that multiply them.

\section{Gaussian Bandit Regret Bounds}
\label{sec:mab bounds}

Our regret bounds in \cref{sec:regret bounds} can be specialized to $K$-armed Gaussian bandits (\cref{sec:gaussian bandit}). Specifically, when the action set $\cA = \set{e_i}_{i \in [K]}$ is the standard Euclidean basis in $\realset^K$, \cref{thm:sequential regret,thm:concurrent regret} can be restated as follows.

\begin{theorem}[Sequential Gaussian bandit regret]
\label{thm:sequential mab regret} Let $|\cS_t| = 1$ for all rounds $t$. Let $\delta = 1 / (m n)$. Then the Bayes regret of \hierts is
\begin{align*}
  \Bregret(m, n)
  \leq \sqrt{2 K m n [c_1 m + c_2] \log(m n)} + c_3\,,
\end{align*}
where $c_3 = O(K)$,
\begin{align*}
  c_1
  = \frac{\sigma_0^2}{\log(1 + \sigma^{-2} \sigma_0^2)}
  \log\left(1 + \frac{\sigma_0^2 n}{\sigma^2 K}\right)\,, \quad
  c_2
  = \frac{\sigma_q^2 c}{\log(1 + \sigma^{-2} \sigma_q^2)}
  \log\left(1 + \frac{\sigma_q^2 m}{\sigma_0^2}\right)\,, \quad
  c 
  = 1 + \frac{\sigma_0^2}{\sigma^2}\,.
\end{align*}
\end{theorem}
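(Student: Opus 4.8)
The plan is to obtain this as a direct specialization of \cref{thm:sequential regret}, so no genuinely new argument is required; the work is in substituting the scalar covariances and in routing the final step through the \emph{finite-action} branch of \cref{lem:bayes regret}. First I would set $d = K$, take $\cA = \set{e_i}_{i \in [K]}$ to be the Euclidean basis, and use the diagonal covariances $\Sigma_0 = \sigma_0^2 I_K$ and $\Sigma_q = \sigma_q^2 I_K$. This collapses every eigenvalue quantity appearing in \cref{thm:sequential regret} to a scalar: $\lambda_1(\Sigma_0) = \lambda_d(\Sigma_0) = \sigma_0^2$, $\lambda_1(\Sigma_q) = \sigma_q^2$, and hence $c_q = \lambda_1^2(\Sigma_0)\lambda_1(\Sigma_q)/\lambda_d^2(\Sigma_0) = \sigma_q^2$ and $c = 1 + \sigma^{-2}\sigma_0^2$. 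Substituting these into the definitions of $c_1$ and $c_2$ from \cref{thm:sequential regret} yields exactly the constants claimed here.

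Next I would reuse, essentially verbatim, the variance bound established in the proof of \cref{thm:sequential regret} (\cref{sec:sequential proof}), where $\cV(m, n) \le d\,[c_1 m + c_2]$ once the logarithmic potentials and the factor $c$ are folded into $c_1$ and $c_2$ as in the statement of that theorem. Nothing in that derivation depends on the covariances being non-diagonal: it only uses the total covariance decomposition of \cref{lem:covariance decomposition} together with the telescoping log-determinant (elliptical) lemmas. It therefore applies unchanged, and with $d = K$ and the specialized eigenvalues above it gives $\cV(m, n) \le K\,[c_1 m + c_2]$.

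The one substantive choice is the final step. Rather than invoking the first (infinite-action) bound of \cref{lem:bayes regret}, which carries a factor $d = K$ out front and a tail term of order $d^{3/2}$, I would invoke its second bound, which is available precisely because $\abs{\cA} = K < \infty$. That bound reads $\sqrt{2 m n \cV(m, n) \log(1/\delta)} + \sqrt{2/\pi}\,\sigma_{\max} K m n \delta$. Plugging in $\cV(m, n) \le K[c_1 m + c_2]$, $\delta = 1/(m n)$ (so $\log(1/\delta) = \log(m n)$), and the explicit value $\sigma_{\max}^2 = \lambda_1(\Sigma_0) + \lambda_1^2(\Sigma_0)\lambda_1(\Sigma_q)/\lambda_d^2(\Sigma_0) = \sigma_0^2 + \sigma_q^2$ produces the leading term $\sqrt{2 K m n [c_1 m + c_2]\log(m n)}$ together with a residual $\sqrt{2/\pi}\,\sqrt{\sigma_0^2 + \sigma_q^2}\,K = O(K) = c_3$.

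There is no real obstacle here beyond bookkeeping: the only thing to get right is to take the finite-action branch of \cref{lem:bayes regret}, since that is what turns the $K$ prefactor of the general linear bound into the $\sqrt{K}$ prefactor and the $O(K^{3/2})$ tail into an $O(K)$ tail. One could alternatively re-derive the variance bound directly from the factored, per-arm posteriors in \eqref{eq:mab hyperposterior} and \eqref{eq:mab conditional}, summing single-coordinate elliptical potentials; but since the diagonal case is a strict special case of \cref{sec:sequential proof}, the substitution route above is both shorter and less error-prone.
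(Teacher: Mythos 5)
Your proposal is correct and matches the paper's own proof, which likewise specializes \cref{thm:sequential regret} by setting $\lambda_1(\Sigma_0) = \lambda_d(\Sigma_0) = \sigma_0^2$, $\lambda_1(\Sigma_q) = \sigma_q^2$ and, crucially, routes the final step through the finite-action bound of \cref{lem:bayes regret} to obtain the $\sqrt{K}$ prefactor and the $O(K)$ tail term $c_3$. All of your substitutions ($c_q = \sigma_q^2$, $c = 1 + \sigma^{-2}\sigma_0^2$, $\sigma_{\max}^2 = \sigma_0^2 + \sigma_q^2$, $\cV(m,n) \le K[c_1 m + c_2]$) check out exactly.
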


The main difference from the proof of \cref{thm:sequential regret} is that we start with the finite-action bound in \cref{lem:bayes regret}. Other than that, we use the facts that $\lambda_1(\Sigma_0) = \lambda_d(\Sigma_0) = \sigma_0^2$ and $\lambda_1(\Sigma_q) = \sigma_q^2$.

\begin{theorem}[Concurrent Gaussian bandit regret]
\label{thm:concurrent mab regret} Let $|\cS_t| \leq L \leq m$. Let $\delta = 1 / (m n)$. Then the Bayes regret of \hierts is
\begin{align*}
  \Bregret(m, n)
  \leq \sqrt{2 K m n [c_1 m + c_2] \log(m n)} + c_3\,,
\end{align*}
where $c_1$ and $c$ are defined as in \cref{thm:sequential mab regret},
\begin{align*}
  c_2
  = \frac{\sigma_q^2 c_4 c}{\log(1 + \sigma^{-2} \sigma_q^2)}
  \log\left(1 + \frac{\sigma_q^2 m}{\sigma_0^2}\right)\,, \quad
  c_4
  = 1 + \frac{\sigma^{-2} \sigma_q^2 (\sigma_0^2 + \sigma^2)}
  {\sigma_q^2 + (\sigma_0^2 + \sigma^2) / L}\,,
\end{align*}
and $c_3 = O(K m)$.
\end{theorem}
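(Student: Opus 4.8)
The plan is to specialize the proof of \cref{thm:concurrent regret} to the $K$-armed Gaussian bandit of \cref{sec:gaussian bandit}, exactly as \cref{thm:sequential mab regret} specializes \cref{thm:sequential regret}. The two ingredients that change are: (i) we invoke the finite-action branch of \cref{lem:bayes regret} (the one for $|\cA| = K$), which replaces the $d$ and $d^\frac{3}{2}$ factors of the infinite-action bound by $K$; and (ii) we substitute the diagonal-covariance identities $\lambda_1(\Sigma_0) = \lambda_d(\Sigma_0) = \sigma_0^2$ and $\lambda_1(\Sigma_q) = \sigma_q^2$ throughout. First I would check that \cref{ass:basis} holds for free: taking $\{a_i\}_{i = 1}^K = \{e_i\}_{i = 1}^K$ gives $\sum_{i = 1}^K e_i e_i\T = I_K$, so $\lambda_K(\sum_{i = 1}^K e_i e_i\T) = 1$ and we may take $\eta = 1$. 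Hence the forced-exploration step of the modified \hierts simply pulls each of the $K$ arms once in every task.

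Next I would follow the regret split of \cref{thm:concurrent regret} into sufficiently-explored tasks ($s \in \cC_t$) and under-explored tasks ($s \notin \cC_t$). Because $d = K$, each task needs at most $K$ forced pulls, so the under-explored contribution is bounded exactly as in \cref{thm:concurrent regret}, yielding $c_3 = O(K m)$ (the factor $m$ comes from forced exploration in each of the $m$ tasks, in contrast to the $O(K)$ term of the sequential \cref{thm:sequential mab regret}). For the explored tasks, on the event $s \in \cC_t$ the algorithm samples from the true posterior, so it remains to bound $\cV(m, n) = \E{}{\sum_{t \geq 1} \sum_{s \in \cS_t} \I{s \in \cC_t} \normw{A_{s, t}}{\hat{\Sigma}_{s, t}}^2}$ and substitute it into the finite-action bound of \cref{lem:bayes regret}.

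I would bound $\cV(m, n)$ via the total covariance decomposition of \cref{lem:covariance decomposition}, which splits each summand into a task term $A_{s, t}\T \tilde{\Sigma}_{s, t} A_{s, t}$ and a hyper-parameter term. The task term depends only on task $s$, so it telescopes as in the first term of the concurrent proof; with $\lambda_1(\Sigma_0) = \sigma_0^2$ it contributes $c_1$. The hyper-parameter term is handled by first replacing $\bar{\Sigma}_t$ with the sequentialized hyper-posterior covariance $\bar{\Sigma}_{s, t}$ at the cost of the factor $\lambda_1(\bar{\Sigma}_{s, t}^{-1} \bar{\Sigma}_t) \leq c_4$ from \cref{lem:sequential concurrent ratio}, and then telescoping as in the sequential analysis. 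Plugging $\lambda_1(\Sigma_q) = \sigma_q^2$, $\lambda_1(\Sigma_0) = \sigma_0^2$, and $\eta = 1$ into the definition of $c_4$ there gives precisely $c_4 = 1 + \frac{\sigma^{-2} \sigma_q^2 (\sigma_0^2 + \sigma^2)}{\sigma_q^2 + (\sigma_0^2 + \sigma^2) / L}$, and this term contributes $c_2$.

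The proof is essentially a substitution, so there is no deep obstacle; the only points requiring care are bookkeeping. I would make sure to use the finite-action branch of \cref{lem:bayes regret} so that the leading term scales as $\sqrt{K m n [c_1 m + c_2] \log(m n)}$ rather than with $d$, and that the tail term becomes $O(K m)$ rather than $O(d^\frac{3}{2} m)$. I would also double-check that $\eta = 1$ is the correct constant for the Euclidean basis, since it propagates into $c_4$ through the quantity $\lambda_1(\Sigma_0) + \sigma^2 / \eta = \sigma_0^2 + \sigma^2$; an error here would change both $c_4$ and $c_2$.
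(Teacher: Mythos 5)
Your proposal is correct and takes essentially the same route as the paper: the paper's proof of this theorem is exactly the specialization you describe, namely invoking the finite-action branch of \cref{lem:bayes regret}, substituting $\lambda_1(\Sigma_0) = \lambda_d(\Sigma_0) = \sigma_0^2$ and $\lambda_1(\Sigma_q) = \sigma_q^2$, and noting that \cref{ass:basis} holds with $\eta = 1$ for the standard Euclidean basis, which propagates into $c_4$ through $\lambda_1(\Sigma_0) + \sigma^2/\eta = \sigma_0^2 + \sigma^2$.

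One bookkeeping caveat concerns $c_3$. You claim the under-explored contribution is ``bounded exactly as in \cref{thm:concurrent regret}, yielding $c_3 = O(Km)$,'' but a verbatim reuse of that bound with $d = K$ gives $O(K^{\frac{3}{2}} m)$: in the proof of \cref{thm:concurrent regret} the per-round forced-exploration regret is bounded via Cauchy--Schwarz by $2 \sigma_{\max} \bigl(\normw{\mu_q}{\hat{\Sigma}_{s, 1}^{-1}} + \sqrt{d}\bigr)$, i.e., $O(\sqrt{K})$ per round, over $Km$ forced rounds. To obtain the stated $O(Km)$ one should exploit the $K$-armed structure instead: during forced exploration the per-round regret is $\E{}{\theta_{s, *, i^*} - \theta_{s, *, i}}$, the gap between two coordinates of a Gaussian vector with covariance $(\sigma_q^2 + \sigma_0^2) I_K$, which is $O(\sqrt{\log K})$ in expectation rather than $O(\sqrt{K})$, giving $O(Km)$ up to logarithmic factors. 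The paper is equally terse on this point (it asserts $c_3 = O(Km)$ with no derivation), so this is a refinement both arguments need rather than a disagreement with the theorem, but as written your justification for the constant does not produce the constant you state.
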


When we specialize \cref{thm:concurrent regret}, we note that $\eta = 1$, since the action set $\cA$ is the standard Euclidean basis.

\section{Image Classification Experiment}
\label{sec:classification experiments}

\begin{figure*}[t]
  \centering
  \begin{minipage}{0.45\textwidth}
    \includegraphics[width=\linewidth]{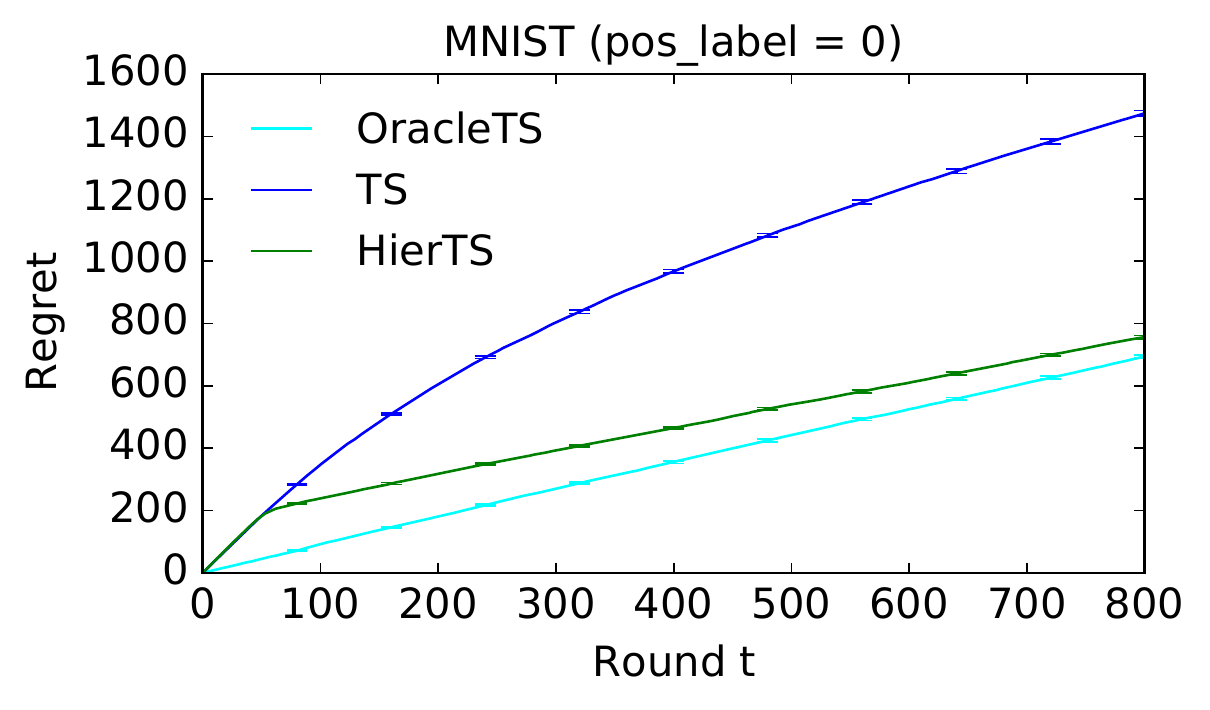}
  \end{minipage}
  \begin{minipage}{0.45\textwidth}
    \includegraphics[width=\linewidth]{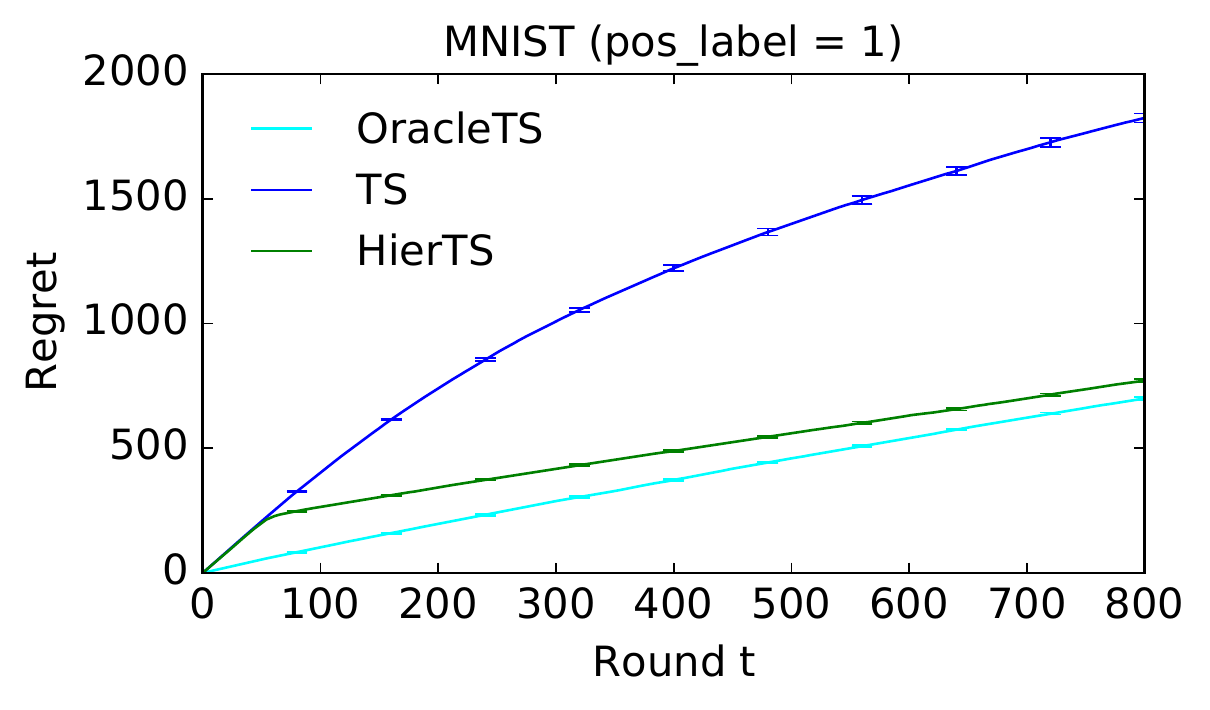}
  \end{minipage}
  \begin{minipage}{0.45\textwidth}
    \includegraphics[width=\linewidth]{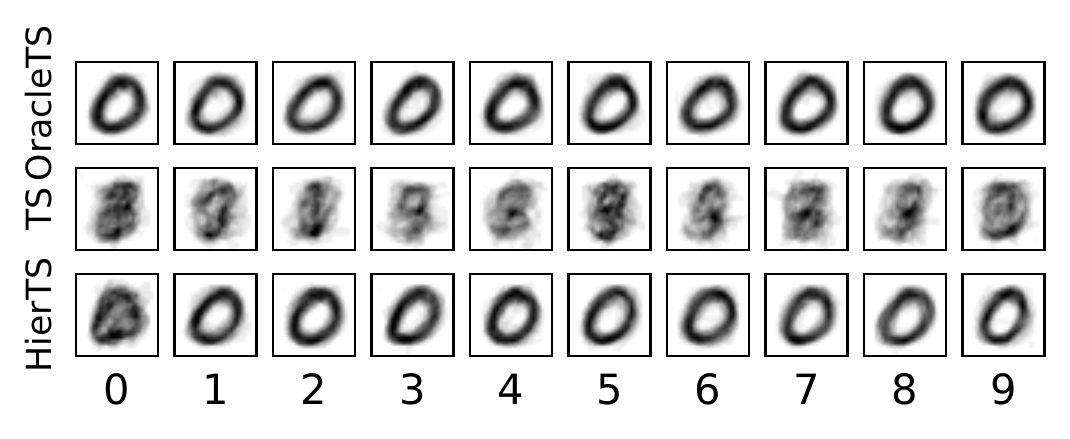}
  \end{minipage}
  \begin{minipage}{0.45\textwidth}
    \includegraphics[width=\linewidth]{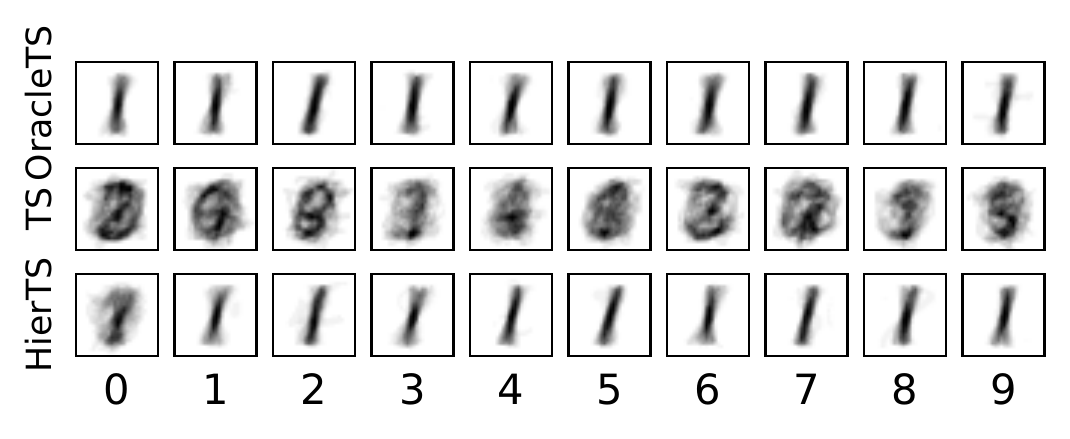}
  \end{minipage}
  \caption{Evaluation of \hierts on multi-task digit classification using MNIST with different positive image classes. On the top, we plot the cumulative Bayes regret at each round. On the bottom, we visualize the most-rewarding image according to the learned hyper-parameter at evenly-spaced intervals.}
  \label{fig:mnist}
\end{figure*}

We conduct an additional experiment that considers online classification using a real-world image dataset. The problem is cast as a multi-task linear bandit with Bernoulli rewards. Specifically, we construct a set of tasks where one image class is selected randomly to have high reward. In each task, at every round, $K$ images are uniformly sampled at random as actions, and the aim of the learning agent is to select an image from the unknown positive image class. The reward of an image from the positive class is $\mathsf{Ber}(0.9)$ and for all other classes is $\mathsf{Ber}(0.1)$.

We use the MNIST dataset \citep{mnist}, which consists of $60, 000$ images of handwritten digits, which we split equally into a training and test set. We down-sample each image to $d = 49$ pixels, which become the feature vector for the corresponding action in the bandit problem. 
For each digit, the training set is used to estimate $\mu_*, \Sigma_0$, where all three algorithms use $\Sigma_0$ but only \oraclets can use $\mu_*$. The algorithms are evaluated on the test set. 
Given a positive digit class, we construct a different task $s$ by sub-sampling from the test set, and computing $\theta_{s, *}$ using positive images from the sub-sampled data.
For each digit as the positive image class, we evaluate our three algorithms on a multi-task linear bandit with $m = 10$ tasks, $n = 400$ interactions per task, and $K = 30$ actions, uniformly sampled from the test images. We chose $L = 5$ tasks per round, leading to $800$ rounds total. We assume a hyper-prior of $Q = \cN(\mathbf{0}, I_d)$ with reward noise $\sigma = 0.5$ because the rewards are Bernoulli. 

\cref{fig:mnist} shows the performance of all algorithms for two digits across $20$ independent runs. We see that \hierts performs very well compared to standard \ts. In addition to regret, we also visualized the learned hyper-parameter $\bar{\mu}_t$ every $80$ rounds. We see that \hierts very quickly learns the correct hyper-parameter, showing that it effectively leverages the shared structure across task. Overall, this experiment shows that even if \hierts assumes a misspecified model of the environment, with non-Gaussian rewards and not knowing the true hyper-prior $Q$ and covariance $\Sigma_0$, \hierts still performs very well.

\end{document}